\theoremstyle{plain} \newtheorem{thm}{Theorem}%[section]
 \newtheorem{lem}[thm]{Lemma}
\theoremstyle{definition} %[section]
\theoremstyle{remark} %[section]
\begin{document}

% If your paper is accepted and the title of your paper is very long,
% the style will print as headings an error message. Use the following
% command to supply a shorter title of your paper so that it can be
% used as headings.
%
%\runningtitle{I use this title instead because the last one was very long}

% If your paper is accepted and the number of authors is large, the
% style will print as headings an error message. Use the following
% command to supply a shorter version of the authors names so that
% they can be used as headings (for example, use only the surnames)
%
%\runningauthor{Surname 1, Surname 2, Surname 3, ...., Surname n}

\twocolumn[

\aistatstitle{Graph Construction for Learning with Unbalanced Data}

\aistatsauthor{ Jing Qian \And Venkatesh Saligrama \And Manqi Zhao }

% \aistatsauthor{ Anonymous Author 1 \And Anonymous Author 2 \And Anonymous Author 3 }

\aistatsaddress{ Boston University\\ Boston, MA 02215 \And Boston University\\ Boston, MA 02215 \And Google Inc. \\ Mountain View, CA}
% \aistatsaddress{ Unknown Institution 1 \And Unknown Institution 2 \And Unknown Institution 3 }
]

\begin{abstract}
Unbalanced data arises in many learning tasks such as clustering of multi-class data, hierarchical divisive clustering and semi-supervised learning. Graph-based approaches are popular tools for these problems. Graph construction is an important aspect of graph-based learning.
We show that graph-based algorithms can fail for unbalanced data for many popular graphs such as $k$-NN, $\epsilon$-neighborhood and full-RBF graphs.
We propose a novel graph construction technique that encodes global statistical information into node degrees through a ranking scheme. The rank of a data sample is an estimate of its $p$-value and is proportional to the total number of data samples with smaller density. This ranking scheme serves as a surrogate for density; can be reliably estimated; and indicates whether a data sample is close to valleys/modes. This rank-modulated degree(RMD) scheme is able to significantly sparsify the graph near valleys and provides an adaptive way to cope with unbalanced data. We then theoretically justify our method through limit cut analysis. Unsupervised and semi-supervised experiments on synthetic and real data sets demonstrate the superiority of our method.
\end{abstract}
%%%%%%%%%%%%%%%%%%%%%%%%%%%%%%%%%%%%%%%%%%
\section{Introduction and Motivation}\label{sec:intro_motiv}
%%%%%%%%%%%%%%%%%%%%%%%%%%%%%%%%%%%%%%%%%%
Graph-based approaches are popular tools for unsupervised clustering and semi-supervised (transductive) learning(SSL) tasks. In these approaches, a graph representing the data set is constructed. Then a graph-based learning algorithm such as spectral clustering(SC) (\cite{Shi00}, \cite{Ng01}) or SSL algorithms (\cite{Zhu03}, \cite{Zhou04}, \cite{WanJebCha08}) is applied on the graph. These algorithms solve the graph-cut minimization problem on the graph. Of the two steps, graph construction is believed to be critical to the performance and has been studied extensively(\cite{zhu08}, \cite{Luxburg07}, \cite{Maier1}, \cite{JebShc06}, \cite{JebWanCha09}). In this paper we will focus on graph-based learning for unbalanced data. The issue of unbalanced data has independent merit and arises routinely in many applications including multi-mode clustering, divisive hierarchical clustering and SSL. Note that while model-based approaches(\cite{Fraley02}) incorporate unbalancedness, they typically work well for relatively simple cluster shapes. In contrast non-parametric graph-based approaches are able to capture complex shapes(\cite{Ng01}).

Common graph construction methods include $\epsilon$-graph, fully-connected RBF-weighted(full-RBF) graph and $k$-nearest neighbor($k$-NN) graph.
$\epsilon$-graph links two nodes $u$ and $v$ if $d(u,v)\leq \epsilon$. $\epsilon$-graph is vulnerable to outliers due to the fixed threshold $\epsilon$.
Full-RBF graph links every pair with RBF weights $w(u,v)=exp(-d(u,v)^2/2\sigma^2)$, which is in fact a soft threshold compared to $\epsilon$-graph($\sigma$ serves the similar role as $\epsilon$). Therefore it also suffers from outliers.
$k$-NN graph links $u$ and $v$ if $v$ is among the $k$ closest neighbors of $u$ or vice versa. It is robust to outlier and is the most widely used method.(\cite{Luxburg07}, \cite{zhu08}).
In \cite{JebShc06} the authors propose $b$-matching graph. This method is supposed to eliminate some of the spurious edges that appear in $k$-NN graph and lead to improved performance(\cite{JebWanCha09}).

Nevertheless, it remains unclear what exactly are spurious edges from statistical viewpoint? How does it impact graph-based algorithms? While it is difficult to provide a general answer, the underlying issues can be clarified by considering unbalanced and proximal clusters as we observe in many examples.

%In this paper we associate \textbf{those edges near valleys of density} as spurious. Spurious edges can cause graph-based algorithms to fail when the data set is unbalanced and proximal.
\begin{figure*}[!htb]
\begin{centering}
\begin{minipage}[t]{.32\textwidth}
\includegraphics[width = 1\textwidth]{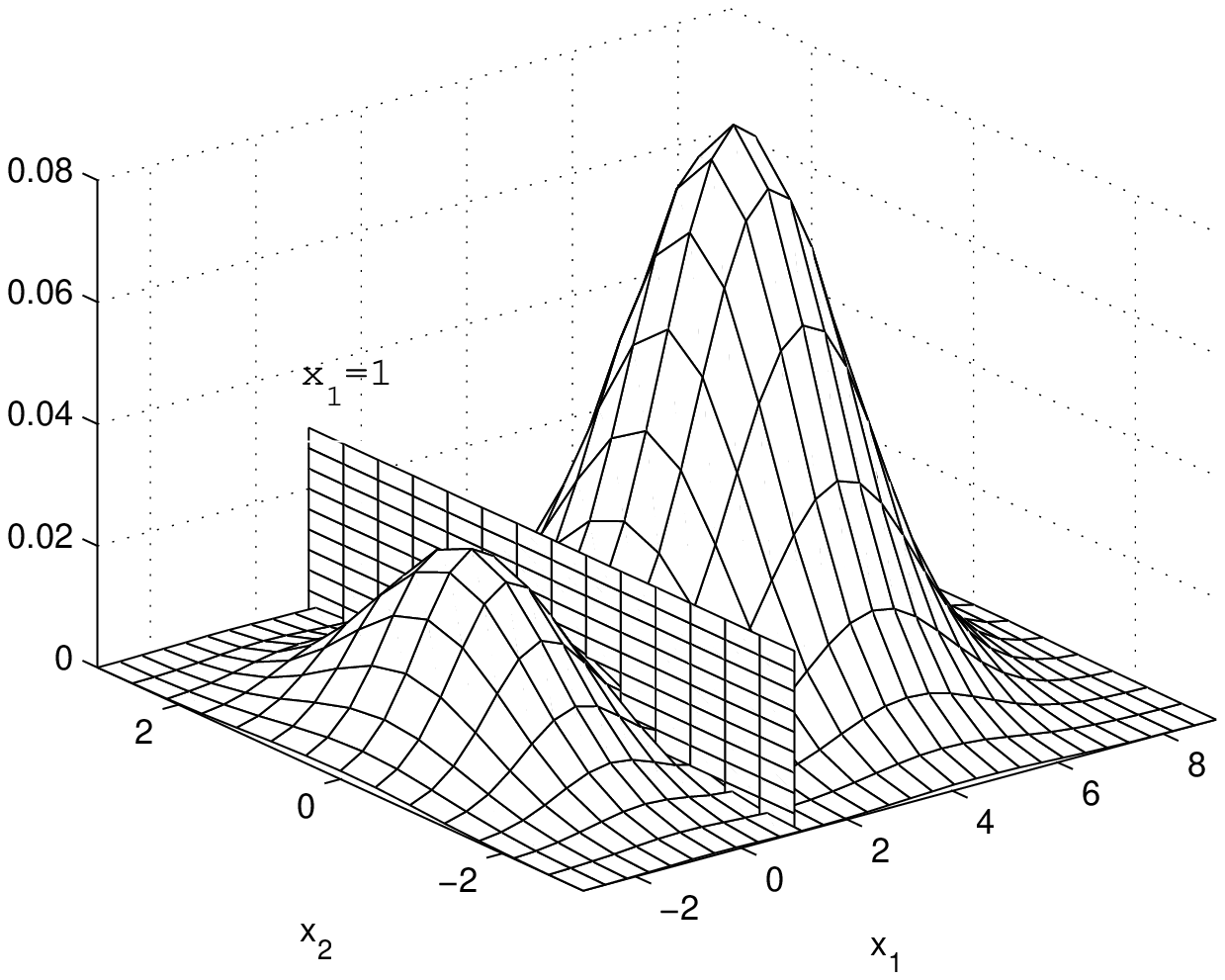}
\makebox[5.2 cm]{\small (a) pdf}
\end{minipage}
\begin{minipage}[t]{.32\textwidth}
\includegraphics[width = 1\textwidth]{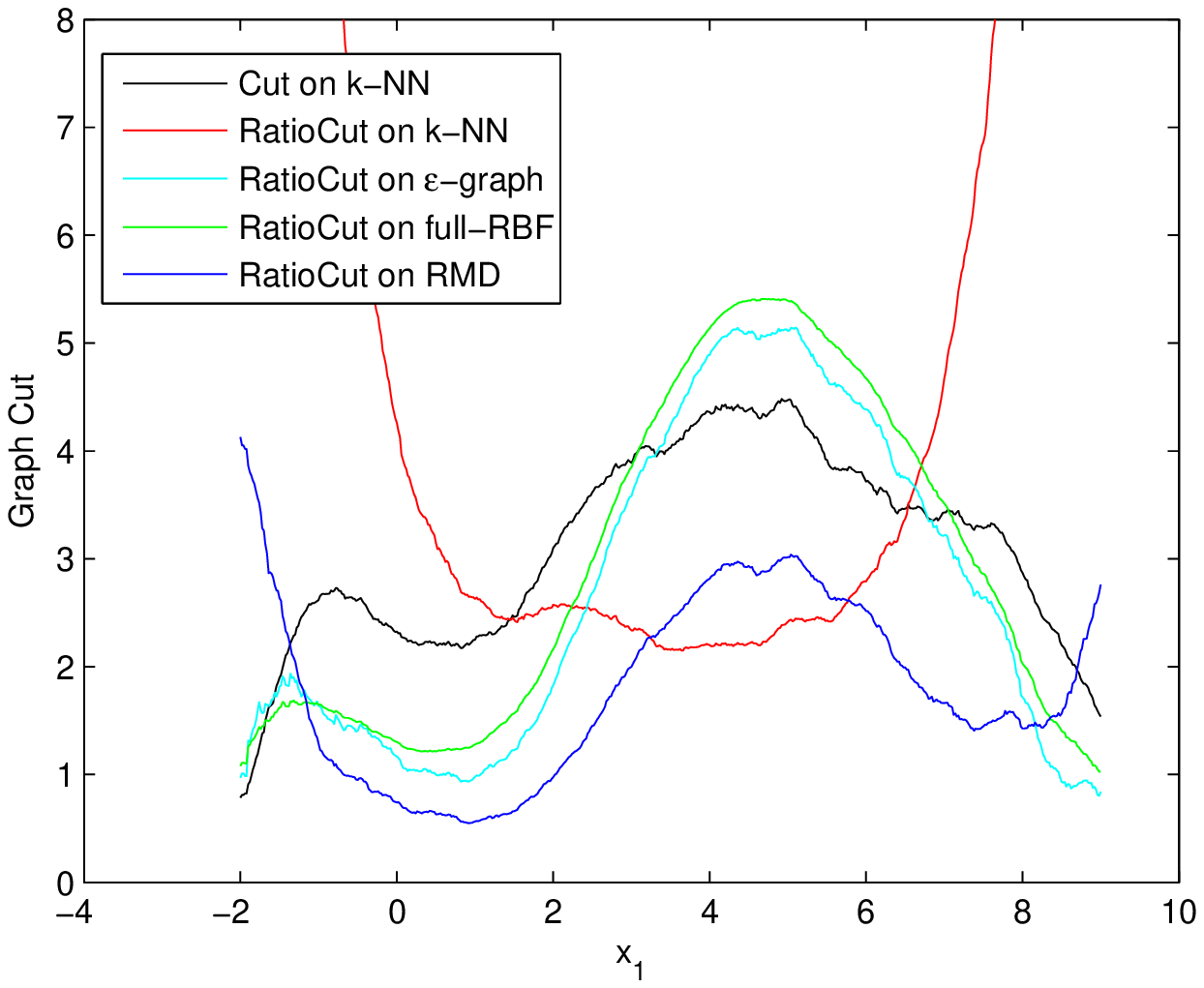}
\makebox[5.2 cm]{\small (b) graph cuts of hyperplanes along $x_1$ }
\end{minipage}
\begin{minipage}[t]{.32\textwidth}
\includegraphics[width = 1\textwidth]{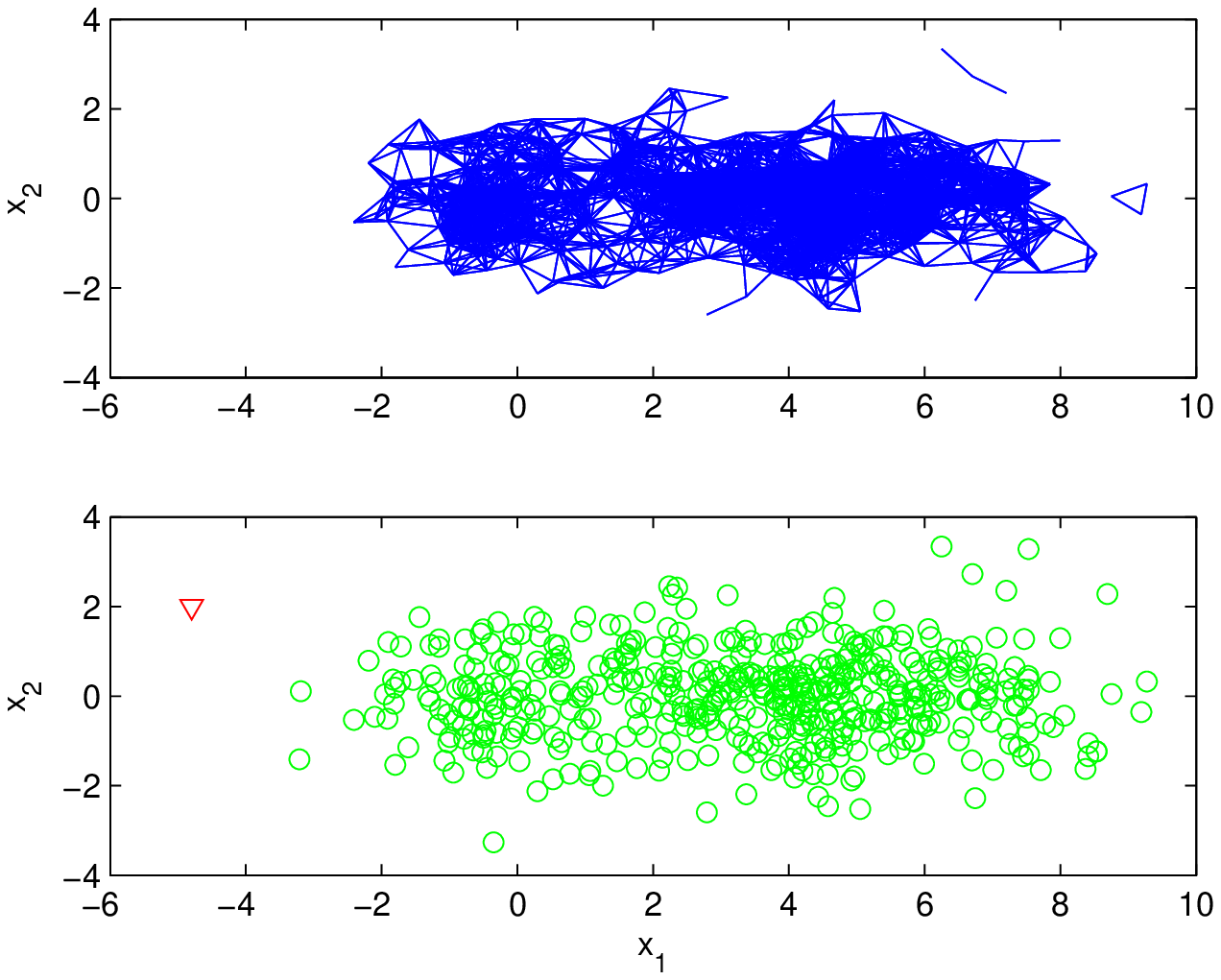}
\makebox[5.2 cm]{\small (c) $\epsilon$-graph and full-RBF graph }
\end{minipage}
\begin{minipage}[t]{.32\textwidth}
\includegraphics[width = 1\textwidth]{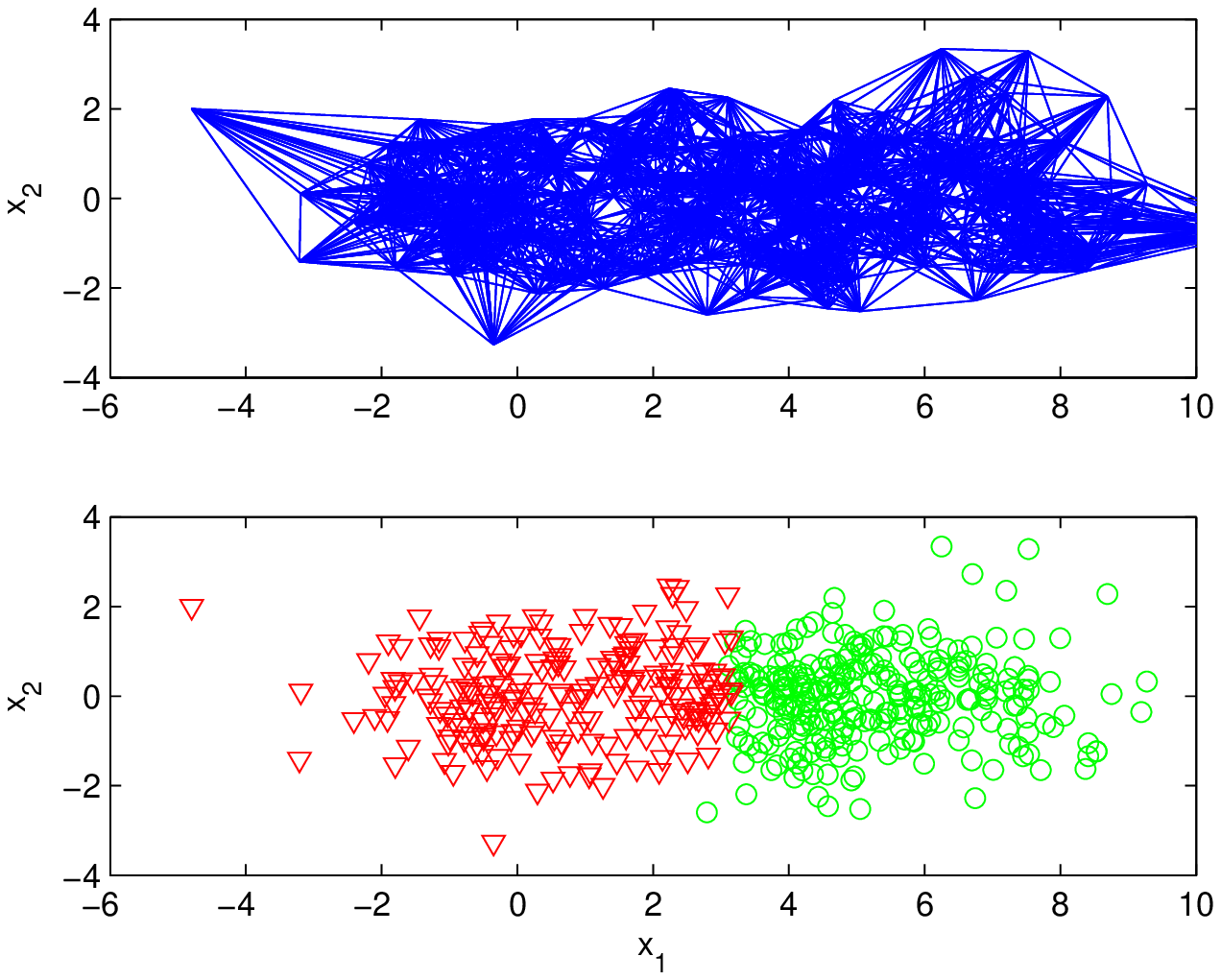}
\makebox[5.2 cm]{\small (d) $k$-NN graph}
\end{minipage}
\begin{minipage}[t]{.32\textwidth}
\includegraphics[width = 1\textwidth]{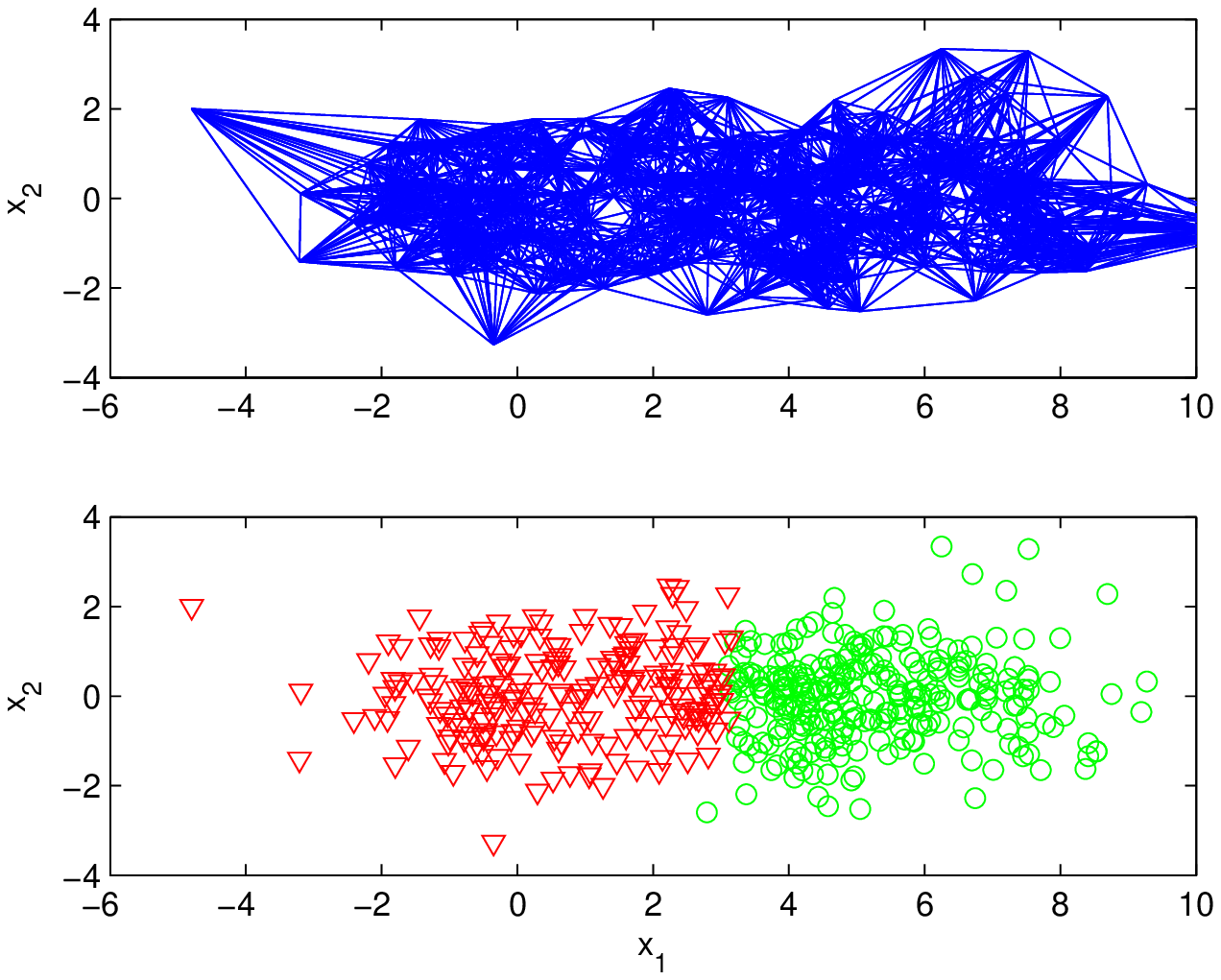}
\makebox[5.2 cm]{\small (c) $b$-matching graph}
\end{minipage}
\begin{minipage}[t]{.32\textwidth}
\includegraphics[width = 1\textwidth]{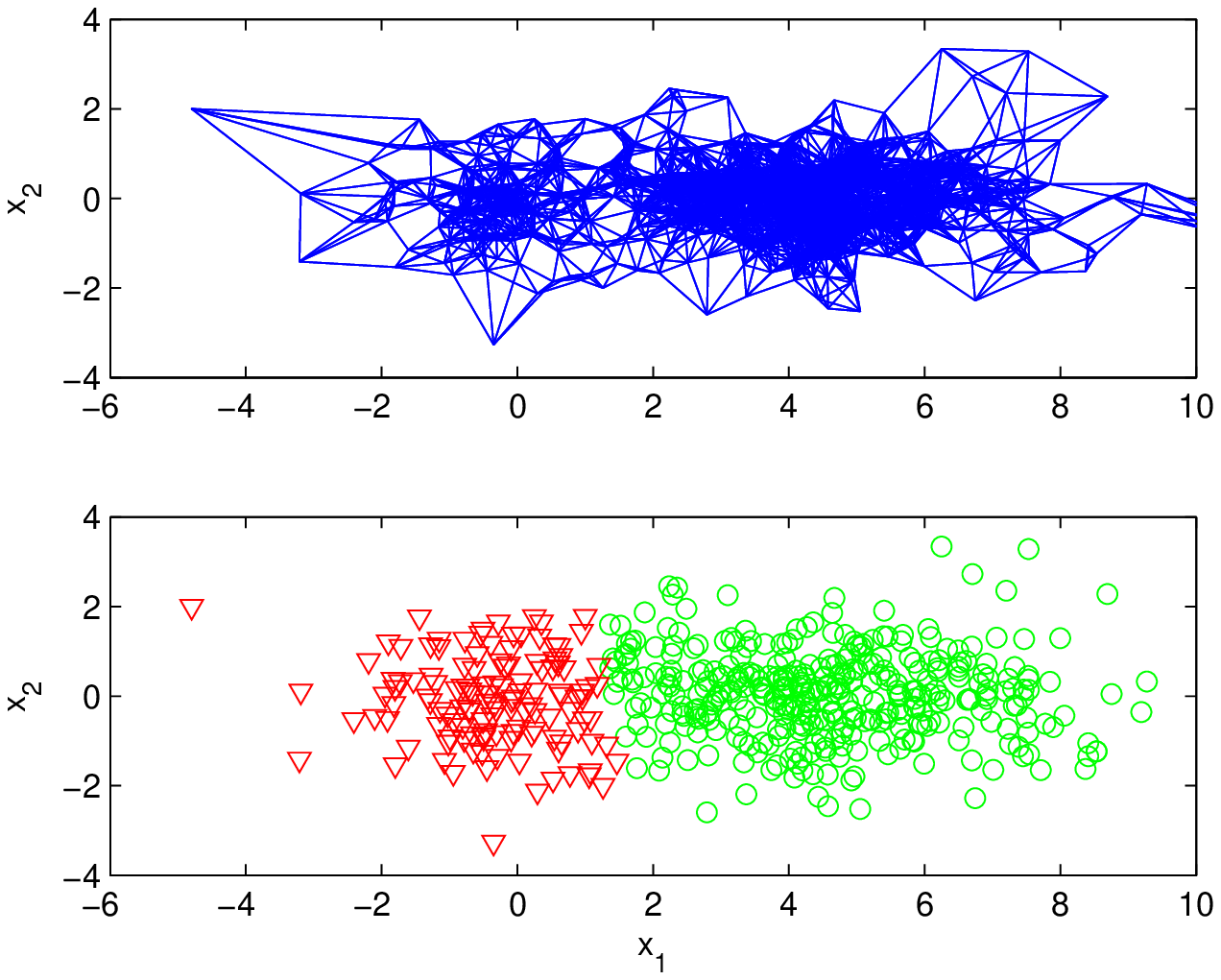}
\makebox[5.2 cm]{\small (d) our method(RMD graph)}
\end{minipage}
\caption{\small Graphs and SC results of unbalanced 2-Gaussian density. Graph cut curves in (b) are properly rescaled. Full-RBF graph is not shown. Binary weights are adopted on $\epsilon$-graph, $k$-NN, $b$-matching and RMD graph. The clustering results on full-RBF and $\epsilon$-graph are exactly same. The valley cut is approximately $x_1\approx1$.  SC fails on $\epsilon$-graph and full-RBF graph due to outliers. SC on $k$-NN or $b$-matching graph cuts at the balanced position due to impact of the balancing term. Our method significantly prunes spurious edges near the valley, enabling SC to cut at the valley, while maintaining robustness to outliers.}
\label{fig:2g_graph}
\end{centering}
\end{figure*}

\noindent {\bf Example:} Consider a data set drawn i.i.d from a proximal and unbalanced 2D gaussian mixture density, $\sum^{2}_{i=1}\alpha_iN(\mu_i,\Sigma_i)$, where $\alpha_1$=0.85, $\alpha_2$=0.15, $\mu_1$=[4.5;0], $\mu_2$=[-0.5;0], $\Sigma_1=diag(2,1), \Sigma_2=I$. Fig.\ref{fig:2g_graph} shows different graphs and clustering results. SC fails on $\epsilon$-graph and full-RBF graph due to outliers; SC on $k$-NN($b$-matching) graph cuts at the wrong position.
%%%%%%%%%%%%%%%%%%%%%%%%%%%%%%%%%%%%%%%%%%
%\subsection{Discussion}\label{subsec:discussion}

\noindent
{\bf Discussion:}
%%%%%%%%%%%%%%%%%%%%%%%%%%%%%%%%%%%%%%%%%%
To explain this phenomenon, we inspect SC from the viewpoint of graph-cut. $G=(V,E)$ is the graph constructed from data set with $n$ samples. Denote $(C,\bar{C})$ as a 2-partition of $V$ separated by a given hyperplane $S$.
The simple cut is defined as:
\begin{eqnarray}\label{equ:cut}
\text{Cut}(C,\bar{C}) = \sum_{w\in C,v\in \bar{C},(u,v)\in E}w(u,v)
\end{eqnarray}
where $w(u,v)$ is the weight of edge $(u,v) \in E$. To avoid the problem of singleton clusters, SC attempts to minimize the RatioCut or Normalized Cut(NCut):
\begin{equation}\label{equ:ratiocut}
    \text{RatioCut}(C,\bar{C})=\text{Cut}(C,\bar{C})\left(\frac{1}{|C|}+\frac{1}{|\bar{C}|}\right)
    %\\ \label{equ:ratiocut}
%    =\sum_{u\in{C},v\in{V\backslash{C}}}w(u,v)\left(\frac{1}{|C|}+\frac{1}{|V\backslash{C}|}\right).
\end{equation}
\begin{equation}\label{equ:ncut}
    \text{NCut}(C,\bar{C})=\text{Cut}(C,\bar{C})\left(\frac{1}{\text{vol}(C)}+\frac{1}{\text{vol}(\bar{C})}\right) %    \label{equ:ncut}
   % =\sum_{u\in{C},v\in{V\backslash{C}}}w(u,v)\left(\frac{1}{\text{vol}(C)}+\frac{1}{\text{vol}(V\backslash{C})}\right).
\end{equation}
where $|C|$ denotes the number of nodes in $C$, $\text{vol}(C)=\sum_{u\in{C},v\in{V}}w(u,v)$.
Note that RatioCut(NCut) adds to the Cut(we will keep using Cut throughout our paper to denote the simple cut) with a balancing term, which in turn induces partitions to be balanced. While this approach works well in many cases, minimizing RatioCut(NCut) on $k$-NN graph has severe drawbacks when dealing with intrinsically unbalanced and proximal data sets.

As a whole, the existing graph construction methods on which the graph-based learning is based have the following weaknesses:

\noindent
{\bf (1) Vulnerability to Outliers:}
Fig.\ref{fig:2g_graph}(b) shows that both the RatioCut curves on $\epsilon$-graph(cyan) and full-RBF graph(green) and the Cut curve(black) are all small at the boundaries. This means in many cases seeking global minimum on these curves will incur single point clusters. It is relatively easy to see that the Cut value is small at boundaries since there is no balancing term. However, it is surprising that even for this simple example, $\epsilon$-graph and full-RBF graph are vulnerable to outliers(Fig.\ref{fig:2g_graph}(c)), even with the balancing term. This is because the Cut value is zero for $\epsilon$-graph and exponentially small for full-RBF. $k$-NN graph does not suffer from this problem and is robust to outliers.

{\bf (2) Balanced Clusters:}
The $k$-NN graph while robust results in poor cuts when the clusters are unbalanced. Note that the valley of the $k$-NN Cut curve(black) corresponds to the density valley($x_1=1$ in Fig.\ref{fig:2g_graph}(b)) is the optimal cut in this situation. However, the red curve in Fig.\ref{fig:2g_graph}(b) shows that the minimum RatioCut on $k$-NN graph(similar on $b$-matching graph) is attained at a balanced position($x_1=3.5$).

\noindent {\bf Example Continued:} To understand this drawback of $k$-NN graph, consider a numerical analysis on this example. Assume the sample size $n$ is sufficiently large and $k$ properly chosen such that $k$-NN linkages approximate the neighborhoods of each node. Then there are roughly $kn_S$ edges near the separating hyperplane $S$, where $n_S$ is the number of points along $S$. Empirically $\text{RatioCut}(S)\approx kn_S\left(\frac{1}{|C|}+\frac{1}{|\bar{C}|}\right)$ on unweighted $k$-NN graph.
Let $S_1$ be the valley cut($|C|/|V|=0.15$), which corresponds to $x_1\approx1$; $S_2$ be the balanced cut($|C|/|V|=0.5$), $x_1\approx3.5$. Let $n_1$ and $n_2$ be the number of points within a $k$-neighborhood of $S_1$ and $S_2$ respectively. It follows that RatioCut$_1\approx 7.84kn_1/n$ and RatioCut$_2\approx 4kn_2/n$. SC will erroneously prefer $S_2$ rather than the valley cut $S_1$ if $n_2<1.96n_1$. Unfortunately, this happens very often in this example since the data set is unbalanced and proximal.

We ascribe this observation to two reasons:

\textbf{(a) RatioCut(NCut) Objective:} RatioCut(NCut) does not necessarily achieve minimum at density valleys. The minimum is attained as a tradeoff between the Cut and the sizes of each cluster(balancing term). While this makes sense when the desired partitions are approximately balanced, it performs poorly when the desired partitions are unbalanced, as shown in Fig.\ref{fig:2g_graph}(d),(e).

\textbf{(b) Graph Construction:} $k$-NN($b$-matching) graph does not fully account for underlying density. Primarily, density is only incorporated in the number of data points along the cut. The degree of each node is essentially a constant and does not differentiate between high/low density regions. This can be too weak for proximal data.
Indeed, \cite{Nadler07} also argues that $k$-NN graph only encodes \textbf{local} information and the clustering task is achieved by pulling all the local information together. In addition, we also observe this phenomena from the limit cut on $k$-NN graph \cite{Maier1}:
\begin{equation}\label{equ:ncut_limit}
    \text{constant}\cdot \int_{S}f^{1-1/d}(s)\text{d}s\left(\frac{1}{\mu(C)}+\frac{1}{\mu(V\backslash C)}\right)
\end{equation}
where $\mu(C)=\int_{C}f(x)\text{d}x$, $f$ the underlying density, $d$ the dimension. The limit of the simple Cut $\int_S f(s)\text{d}s$(even without the weakening power of $d$) only {\em counts the number of points along $S$}. There is no information about whether $S$ is near valleys/modes. While other graphs do incorporate more density(the limits of Cut are $\int_S f^2(s)\text{d}s$ for $\epsilon$-graph \cite{Maier1} and $\int_S f(s)\text{d}s$ for full-RBF graph \cite{Narayanan06}), they are vulnerable to outliers and their performance is unsatisfactory for both balanced and unbalanced data as in \cite{JebWanCha09} and our simulations.
% While other graph construction techniques such as $\epsilon$- graph and fully connected RBF do incorporate local density, they are still local and do not incorporate valley or mode information. Furthermore, as observed in \cite{Luxburg07} and in our simulations their performance is unsatisfactory for both balanced and unbalanced data.

\noindent
{\bf Our Approach:}
The question now is how to find the valley cut while maintaining robustness to outlier?
At a high-level we would like to follow the density curve while avoiding clusters that correspond to outliers. One option can be to improve the objective, for example, minimize $\text{cut}^2(S)\left(\frac{1}{|C|}+\frac{1}{|\bar{C}|}\right)$ on $k$-NN graph. However, it is unclear how to solve this problem.
We adopt another approach, which is to construct a graph that incorporates the underlying density and is robust to outliers.
Specifically, we attempt to adaptively {\em sparsify} the graph by modulating node degrees through a ranking scheme. The rank of a data sample is an estimate of its $p$-value and is proportional to the total number of samples with smaller density. This rank indicates whether a node is close to density valleys/modes, and can be reliably estimated. Our rank-modulated degree(RMD) graph is able to significantly reduce(increase) the Cut values near valleys(modes), leading to emphasizing the Cut in RatioCut optimization, while maintaining robustness to outliers. Moreover, our scheme provides graph-based algorithms with adaptability to unbalanced data. Note that success in clustering unbalanced data has implications for {\bf divisive hierarchical clustering} with multiple clusters. In this scheme at each step a binary partition is attained by SC with possibly two unbalanced parts.

The remainder of the paper is organized as follows.
We describe our RMD scheme for learning with unbalanced data in Sec.\ref{sec:RMD_idea}, with more details in Sec.\ref{subsec:rank}$\sim$\ref{subsec:cv_scheme}. The limit expression of RatioCut(NCut) for RMD graph is investigated in Sec.\ref{sec:thm}. Experiments on synthetic and real datasets are reported in Sec.\ref{sec:experiment}.
%%%%%%%%%%%%%%%%%%%%%%%%%%%%%%%%%%%%%%%%%%
\section{RMD Graph: Main Idea}\label{sec:RMD_idea}
%%%%%%%%%%%%%%%%%%%%%%%%%%%%%%%%%%%%%%%%%%
We propose to sparsify the graph by modulating the degrees based on ranking of data samples. The ranking is global and we call the resulting graph the Rank-Modulated Degree(RMD) graph. Our graph is able to find the valley cut while being robust to outliers. Moreover, our method is adaptable to data sets with different levels of unbalancedness. The process of RMD graph based learning involves the following steps:
%\begin{itemize}
%\item[1.]

\noindent
{\bf (1) Rank Computation:}
The rank $R(u)$ of node (data sample) $u$ is calculated:
\begin{eqnarray}\label{eq:grank}
  R(u) = \frac{1}{N}\sum_{i=1}^N\mathbb{I}_{\{G(u)\leq G(x_i)\}}
\end{eqnarray}
where $\mathbb{I}$ denotes the indicator function, $G(u)$ is some statistic of $u$. The rank is an ordering of the data samples based on statistic $G(\cdot)$. \\
%\item[2.]
\noindent
{\bf (2) RMD Graph Construction:}
Connect each node $u$ to its deg($u$) closest neighbors, where the degree is modulated through an affine monotonic function of the rank:
\begin{eqnarray}\label{eq:degree}
  \text{deg}(u) = k(\lambda + \phi(R(u)))
\end{eqnarray}
where $k$ is the average degree, $\phi$ is some monotonic function and $\lambda \in [0,\,1]$. \\
%\end{itemize}
\noindent
{\bf (3) Graph-based Learning Algorithms:}
Spectral clutering or graph-based SSL algorithms, which involves minimizing RatioCut(NCut), are applied on RMD graph. \\
\noindent
{\bf (4) Cross-Validation Scheme:}
Repeat (2) and (3) using different configurations of degree modulation scheme(Eq.(\ref{eq:degree})). Among the results, pick the best which has the minimum simple Cut value.
%%%%%%%%%%%%%%%%%%%%%%%%%%%%%%%%%%%%%%%%%%
\subsection{Rank Computation}\label{subsec:rank}
%%%%%%%%%%%%%%%%%%%%%%%%%%%%%%%%%%%%%%%%%%
We have several options for choosing the statistic $G(\cdot)$ for rank computation.
%\begin{itemize}
%\item

\noindent
{\bf (1) $\epsilon$-Neighborhood:} Choose an $\epsilon$-ball around $u$ and set: $G(u)=-N_{\epsilon}(u)$, where $N_{\epsilon}(u)$ is the number of neighbors in an $\epsilon$ radius of $u$.

%\item
\noindent
{\bf (2) $l$-Nearest Neighorhood:} Here we choose the $l$-th nearest-neighbor distance of $u$: $G(u)=D_{(l)}(u)$.

\noindent
{\bf (3) Average $l$-Neighbor distance:} The average of $u$'s $\frac{l}{2}$-th to $\frac{3l}{2}$-th nearest-neighbor distances:
\begin{equation}\label{equ:G(u)}
G(u)=\frac{1}{l}\sum^{l+\lfloor \frac{l}{2} \rfloor}_{i=l-\lfloor \frac{l-1}{2} \rfloor}D_{(i)}(u)
\end{equation}
%\end{itemize}
Empirically we observe the third option leads to better performance and robustness. For the purpose of implementation (see Sec.~4) we use the U-statistic resampling technique for reducing variance(\cite{Korolyuk94}).
%%%%%%%%%%%%%%%%%%%%%%%%%%%%%%%%%%%%%%%%%%
\begin{figure}[htb]
\begin{centering}
\begin{minipage}[t]{.4\textwidth}
\includegraphics[width = 1\textwidth]{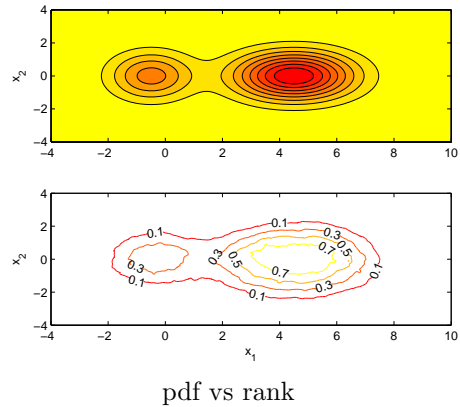}
\makebox[6.5 cm]{ pdf vs rank }
\end{minipage}
%\begin{minipage}[t]{.4\textwidth}
%\includegraphics[width = 1\textwidth]{gauss2D_cut_value_new.eps}
%\makebox[6.5 cm]{(b) RatioCut values along $x_1$-axis}
%\end{minipage}
\caption{\small pdf and ranks of the unbalanced 2-Gaussian density of Fig.\ref{fig:2g_graph}. High/low ranks correspond to density modes/valleys.}
\label{fig:rwcont}
\end{centering}
\end{figure}
% Note that we have proposed a continuous valued function, namely, the rank $R(u)$, for sparsification. This function serves as a surrogate for identifying valleys and modes.

\noindent
{\bf Properties of Rank Function:}\\
%\begin{itemize}
%\item
{\bf (1) Valley/Mode Surrogate:} The value of $R(u)$ is a direct indicator of whether $u$ lies near density valleys.(see Fig.\ref{fig:rwcont}). It ranges in $[0,1]$ with small values indicating globally low density regions and large values indicating globally high-density regions.
%\item

\noindent
{\bf (2) Smoothness:}
$R(u)\in[0,1]$ is the integral of pdf asymptotically(see Thm.\ref{rank-pvalue} in Sec.\ref{sec:thm}); it's smooth and uniformly distributed. This makes it appropriate to globally modulate the degrees of nodes without worrying about scale and allows controlling the average degree.

%\item
\noindent
{\bf (3) Precision:} We do not need our estimates to be precise for every point; the resulting cuts will typically depend on low ranks rather than the exact value, of most nearby points.
%\end{itemize}

%%%%%%%%%%%%%%%%%%%%%%%%%%%%%%%%%%%%%%%%%%
\subsection{Rank-Modulated Degree Graph}\label{subsec:degree}
%%%%%%%%%%%%%%%%%%%%%%%%%%%%%%%%%%%%%%%%%%
Based on the ranks, our degree modulation scheme(Eq.(\ref{eq:degree})) successfully solves the two issues of traditional graphs, as well as additionally provides the adaptability to data sets with different unbalancedness:

\noindent
{\bf (1) Emphasis of Cut term: }
The monotonicity of deg($u$) in $R(u)$ immediately implies that nodes near valleys(with small ranks) will have fewer edges.
Fig.\ref{fig:2g_graph}(b) shows that compared to $k$-NN graph, RMD graph significantly reduces(increases) Cut(thus RatioCut) values near valleys(modes), and RatioCut minimum is attained near valleys.
Fig.\ref{fig:2g_graph}(f) demonstrates the remarkable effect of graph sparsification near density valleys on RMD graph.

\noindent {\bf Example Continued:} Recall the example of Fig.\ref{fig:2g_graph}. On RMD graph RatioCut$_1\approx 7.84k_1n_1/n$ and RatioCut$_2\approx 4k_2n_2/n$, where $k_1$ and $k_2$ are the average degrees of nodes around the valley cut $S_1$($x_1=1$) and the balanced cut $S_2$($x_1=3.5$). Fig.\ref{fig:rwcont} shows the average rank near $S_1$ is roughly 0.2 and 0.7 near $S_2$. Suppose the degree modulation scheme is deg$(u)=k(1/3+2R^2(u))$, then $k_1\approx 0.41k$ and $k_2\approx 1.31k$. Now $S_2$ is erroneously preferred only if $n_2<0.61n_1$, which hardly happens!

\noindent
{\bf (2) Robustness to Outliers: }
The minimum degree of nodes in RMD graph is $k\lambda$, even for distant outliers. This leads to robustness as is shown in Fig.\ref{fig:2g_graph}(b), where the RatioCut curve on RMD graph(blue) goes up at border regions, guaranteeing the valley minimum is the global minimum of RatioCut. Fig.\ref{fig:2g_graph}(f) also shows SC on RMD graph works well with the outlier.

\noindent
{\bf (3) Adaptability to Unbalancedness: }
The parameters in Eq.(\ref{eq:degree}) can be configured flexibly. With our cross-validation scheme, this degree modulation scheme can offer graph-based learning algorithms with adaptability to data sets with different unbalancedness. We'll discuss this advantage in Sec.\ref{subsec:cv_scheme}.

To construct RMD graph, we provide two algorithms. The first is similar to the traditional $k$-NN: connect $u$ and $v$ if $v$ is among the deg$(u)$ nearest neighbors of $u$ or $u$ is among the deg$(v)$ nearest neighbors of $v$. We determine the degree for each node based on Eq.~\ref{eq:degree}. The time complexity is $O(n^2\text{log}n)$.

The second method solves an optimization problem to construct RMD graph:
\begin{eqnarray}\label{equ:RMD_maxweight}
%\begin{split}
   && \min_{P_{ij}\in\{0,1\}}\sum_{ij}P_{ij}D_{ij} \\
   && s.t. \sum_jP_{ij}=\text{deg}(x_i),\forall i,j\in 1,...,n \nonumber \\
   &&      P_{ii}=0,P_{ij}=P_{ji},\forall i,j\in 1,...,n \nonumber
%\end{split}
\end{eqnarray}
where $D_{ij}$ is the distance between $x_i$ and $x_j$, $P_{ij}=1$ indicates an edge between $x_i$ and $x_j$. Different distance metrics can be applied such as the Euclidean distance or other derived distance measures.

To solve this optimization problem, we apply the Loopy Belief Propagation algorithm which is originally designed for the $b$-matching problem in \cite{HuaJeb07}. They prove that max-product message passing is guaranteed to converge to the true maximum MAP in $O(n^3)$ time on bipartite graphs; in practice the convergence can be much faster. More details can be found in \cite{HuaJeb07}.

%%%%%%%%%%%%%%%%%%%%%%%%%%%%%%%%%%%%%%%%%%
\subsection{Graph Based Learning Algorithms}\label{subsec:alg}
%%%%%%%%%%%%%%%%%%%%%%%%%%%%%%%%%%%%%%%%%%
We apply several popular graph-based algorithms on various graphs to validate the superiority of RMD graph for unsupervised clustering and semi-supervised learning tasks. These algorithms all involving minimizing $\text{tr}(F^TLF)$ plus some other constraints or penalties, where $F$ is the cluster indicator function or classification(labeling) function, $L$ is the graph Laplacian matrix. Based on spectral graph theory \cite{Chung96}, this is equivalent to minimizing RatioCut(NCut) for unnormalized(normalized) $L$.

\noindent {\bf Spectral Clustering(SC):}
Let $n$ denotes the sample size, $c$ the number of clusters, $C_j$ the $j$-th cluster, $j=1,...,c$, and $L$ the unnormalized graph Laplacian matrix computed from the constructed graph. $F$ is the cluster indicator matrix.
% defined as:
%\begin{equation*}
%    F_{i,j}=\left\{  \begin{array}{cc}
%                  1/\sqrt{|C_j|} & v_i\in C_j \\
%                  0 & \text{otherwise} \\
%                \end{array}
%            \right.
%\end{equation*}
Unnormalized SC(NCut minimization for normalized $L$ is similar) aims to solve the following optimization problem:
\begin{eqnarray*}
   && \min_{F}\,\,\text{tr}(F^TLF) \\
   && s.t.\,\, F^TF=I,F \text{  as defined above}.
\end{eqnarray*}
Details about SC can be found in \cite{Luxburg07}.

\noindent {\bf Gaussian Random Fields(GRF): }
GRF aims to recover the classification function $F=[F_{l}\,F_{u}]^T$ by optimizing the following cost function on the graph:
\begin{eqnarray*}
    && \min_{F\in{R^{n\times c}}}\text{tr}(F^T L F) \\
    && s.t. \quad LF_u=0, F_l=Y_l
\end{eqnarray*}
where $c$ is the number of classes, $L$ denotes the unnormalized graph Laplacian, $Y_l$ the labeling matrix. Details about GRF can be found in \cite{Zhu03}.

\noindent {\bf Graph Transduction via Alternating Minimization(GTAM): }
Based on the graph and label information, GTAM aims to recover the classification function $F=[F_{l}\,F_{u}]^T\in \mathbb{R}^{n\times c}$ where $F_l\, (F_u)$ corresponds to the labeled (unlabeled) data. Specifically, GTAM solves the following problem:
\begin{eqnarray*}
   && \min_{F,Y}\,\,\text{tr}(F^TLF+\mu (F-VF)^T(F-VY))\\
   && s.t. \quad \sum_{j}Y_{ij}=1
\end{eqnarray*}
where $V$ is a node regularizer to balance the influence of labels from different classes.
% The problem can be solved by alternatingly optimizing the classification function $F$ and the labeling matrix $Y$.
Details about GTAM can be found in \cite{WanJebCha08}.
%%%%%%%%%%%%%%%%%%%%%%%%%%%%%%%%%%%%%%%%%%
\subsection{Adaptability and Cross Validation Scheme}\label{subsec:cv_scheme}
%%%%%%%%%%%%%%%%%%%%%%%%%%%%%%%%%%%%%%%%%%
Parameters in Eq.(\ref{eq:degree}) can be specified flexibly. Our overall approach is to keep the average degree at $k$, while sparsifying the graph differently to cope with unbalanced data. Note that $R(u)$ is uniformly distributed within $[0,1]$ and $\phi$ is any monotonic function. For example, if $\phi$ equals identity and $\lambda=0.5$, the node degrees deg$(u)=k(0.5+R(u))$ and the degree range is $[\frac{1}{2}k,\frac{3}{2}k]$; another example is deg$(u)=k(1/3+2R^2(u))$ with node degrees in $[\frac{1}{3}k,\frac{7}{3}k]$.

Generally speaking, RMD graphs with small dynamic ranges of node degrees lead to less flexibility in dealing with unbalanced data but are robust to outliers($k$-NN is the extreme case where all nodes have same degrees). RMD graphs with large dynamic range lead to sparser graphs and can adapt to unbalanced data. Nevertheless it is sensitive to outliers and can incur clusters of very small sizes.

Based on the analysis in Sec.1 and Fig.1(b), we wish to find minimum cuts with sizable clusters. The most important aspect of our scheme is that it adapts to different levels of unbalancedness. We can then use a cross validation scheme to select appropriate cuts.

In our simulation, the cross-validation scheme is simple. Among all the partitions obtained by graph-based algorithms on several different RMD graphs, we first discard those results with clusters of sizes smaller than a certain threshold(outliers influence the results). Among the rest, we pick the partition with the minimum Cut value.

%%%%%%%%%%%%%%%%%%%%%%%%%%%%%%%%%%%%%%%%%%
\section{Analysis}\label{sec:thm}
%%%%%%%%%%%%%%%%%%%%%%%%%%%%%%%%%%%%%%%%%%
Assume the data set $D=\{x_1,...,x_n\}$  is drawn i.i.d. from density $f$ in $\mathbb{R}^d$. $f$ has a compact support $C$. Let $G=(V,E)$ be the RMD graph. Given a separating hyperplane $S$, denote $C^+$,$C^-$ as two subsets of $C$ split by $S$, $\eta_d$ the volume of unit ball in $\mathbb{R}^d$. We first show that our ranking acts as an indicator of valley/mode.

\textbf{Regularity condition:} $f(\cdot)$ is continuous and lower-bounded: $f(x) \geq f_{min}>0$. It is smooth, i.e. $||\nabla f(x)||\leq\lambda$, where $\nabla f(x)$ is the gradient of $f(\cdot)$ at $x$. Flat regions are disallowed, i.e. $\forall x \in \mathbb{X}$, $\forall \sigma>0$, $\mathcal{P}\left\{y: |f(y)-f(x)|<\sigma\right\}\leq M\sigma$, where $M$ is a constant.

%The limit expression for the Ncut for a given hyperplane $S$ on unweighted $k$-NN graph was derived in \cite{Maier1}:
%\begin{equation}\label{eq:unweightcut}
%\begin{split}
%    \sqrt[d]{\frac{n}{k_n}}\text{NCut}_n(S)\longrightarrow &\\ C_d\int_S{f^{1-1/d}(s)\text{d}s}&\left(\mu(C^+)^{-1}+\mu(C^-)^{-1}\right).
%    \end{split}
%\end{equation}
%where $C_d = \frac{2\eta_{d-1}}{(d+1)\eta_d^{1+1/d}}$, $\mu(C^{\pm})=\int_{C^{\pm}}f(x)\text{d}x$.
% In section \ref{subsec:motiv} we've explained that the term $f^{1-1/d}(s)$ in equation \ref{equ:ncut_limit} results from counting the number of points near $S$ for NCut, and $k$-NN graph doesn't contribute useful information. This is too weak to produce a satisfactory result for unbalanced data set.
First we show the asymptotic consistency of the rank $R(u)$. The limit of $R(u)$, $p(u)$, is exactly the complement of the volume of the level set containing $u$. Note that $p(u)$ is small near valleys. It is also smoother than pdf, and scales in $[0,1]$. The proof can be found in the supplementary material.
\begin{thm}\label{rank-pvalue}
Assume the density $f$ satisfies the above regularity assumptions. For a proper choice of parameters of $G(u)$, we have,
\begin{equation}
    R(u)\rightarrow p(u):= \int_{\left\{x:f(x)\leq
f(u)\right\}}f(x)\text{d}x
\end{equation}
as $n\rightarrow\infty$.
%, where the p-value term $p(\cdot)$ is defined as
%\begin{eqnarray}\label{def:pvalue}
%p(x)= \int_{\left\{y:f(y)\leq
%f(x)\right\}}f(y)\text{d}y
%\end{eqnarray}
\end{thm}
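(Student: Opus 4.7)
The plan is to decompose the convergence into two pieces: (i) a deterministic step showing that the population version of the rank equals $p(u)$, and (ii) a stochastic step showing that $R(u)$ concentrates on its population version, exploiting the fact that $G(u)$ is a consistent estimator of a monotone function of $f(u)$.

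For step (i), fix the statistic $G$ (I will work with the third choice in \eqref{equ:G(u)}, the average of the $l/2$-th through $3l/2$-th nearest-neighbor distances, since the other two are handled similarly). Standard nearest-neighbor asymptotics for i.i.d.\ samples from a density $f$ satisfying the regularity conditions give $D_{(l)}(u) \cdot \bigl(n\eta_d f(u)/l\bigr)^{1/d}\to 1$ in probability as $n,l\to\infty$ with $l/n\to 0$ and $l/\log n\to\infty$. Averaging over the window $[l/2,3l/2]$ preserves this limit, so $G(u)\to g(f(u))$ in probability, where $g(t) = (l/(n\eta_d t))^{1/d}$ is strictly decreasing in $t$. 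Hence $\{G(u)\leq G(x_i)\}$ asymptotically coincides with $\{f(u)\geq f(x_i)\}$, so the population version of the rank is
\begin{equation*}
\mathbb{P}\bigl(f(X)\leq f(u)\bigr) \;=\; \int_{\{x:\,f(x)\leq f(u)\}} f(x)\,\mathrm{d}x \;=\; p(u).
\end{equation*}

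For step (ii), I would write $R(u)-p(u) = T_1 + T_2 + T_3$, where $T_1$ is the empirical-process error $\frac{1}{N}\sum_i \mathbb{I}_{\{f(x_i)\leq f(u)\}} - \mathbb{P}(f(X)\leq f(u))$, $T_2$ is the difference between $\frac{1}{N}\sum_i \mathbb{I}_{\{G(u)\leq G(x_i)\}}$ and $\frac{1}{N}\sum_i \mathbb{I}_{\{f(x_i)\leq f(u)\}}$, and $T_3$ vanishes under the relabeling. The term $T_1$ vanishes by the strong law of large numbers. Term $T_2$ is the delicate one: it counts the fraction of sample points $x_i$ for which $G(u)$ and $G(x_i)$ lie on different sides of the threshold while $f(u)$ and $f(x_i)$ do not (or vice versa). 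Using the uniform concentration $|G(x_i)-g(f(x_i))|\leq \delta_n$ with $\delta_n\to 0$ (established via Chernoff-type bounds for binomial counts of $\epsilon$-balls, together with the Lipschitz assumption $\|\nabla f\|\leq \lambda$), such points must satisfy $|f(x_i)-f(u)|\leq c\,\delta_n$ for some constant $c$ depending on $f_{\min}$. The no-flat-region condition then bounds the mass of such $x_i$ by $Mc\,\delta_n$, so $|T_2|\to 0$.

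The principal obstacle is obtaining the uniform concentration of $G$ that feeds into $T_2$. Handling a single point $u$ is routine, but to bound $T_2$ one needs control over $G(x_i)$ for all sample points simultaneously. I would address this via a union bound over the $n$ sample points combined with an exponential deviation inequality for the nearest-neighbor radius (or for the binomial $N_\epsilon(u)$ under the first choice of $G$), which is feasible provided $l$ grows at least logarithmically in $n$. The compactness of $\mathrm{supp}(f)$ and the lower bound $f\geq f_{\min}$ keep all constants bounded, so the rate $\delta_n$ can be made explicit; combining it with the anti-concentration estimate $M\sigma$ from the regularity condition closes the argument.
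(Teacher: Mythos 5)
Your proposal is sound in outline but follows a genuinely different route from the paper. The paper does not work with the full-sample rank at all: it uses a sample-splitting (U-statistic) construction, $n=m_1(m_2+1)$, in which $G(u;D_j)$ and $G(x_j;D_j)$ are computed on disjoint blocks $D_j$, so the indicators $Y_j=\mathbb{I}_{\{G(x_j;D_j)>G(u;D_j)\}}$ are independent and $R(u)$ concentrates at $\mathbb{E}[R(u)]$ by a single application of Hoeffding's inequality. The real work then goes into showing $\mathbb{E}[R(u)]\to p(u)$: for a fixed pair $(u,x)$ McDiarmid's inequality is applied to $F_x=G(x)-G(u)$ (bounded differences of order $C/l$), the expected $\ell$-NN distance is related to $A(x)=(l/(m_2 c_d f(x)))^{1/d}$ via a separate lemma, and the support is split into a region where $|f(x)-f(u)|$ is large (where the indicator agrees with $\mathbb{I}_{\{f(u)>f(x)\}}$ up to an exponentially small error) and a thin band controlled by the no-flat-region condition, with $l=m_2^{\alpha}$, $\frac{d+4}{2d+4}<\alpha<1$. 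You instead keep the dependent, full-sample rank and replace the independence trick by uniform concentration of the nearest-neighbor statistic over all $n$ sample points (union bound, needing $l\gtrsim\log n$), then use the LLN for the oracle indicators $\mathbb{I}_{\{f(x_i)\le f(u)\}}$ and the same anti-concentration band argument for the ambiguous points. Your route is closer to the main-text definition of $R(u)$ and avoids the block structure; the paper's route buys exact independence (hence a clean Hoeffding step and no uniformity over sample points) at the price of analyzing only the split-sample estimator. Two small points to tighten in your version: the bound $Mc\,\delta_n$ from the regularity condition controls the \emph{probability mass} of the band $\{x:|f(x)-f(u)|\le c\delta_n\}$, whereas $T_2$ is an \emph{empirical} fraction, so you need one more step (Markov or a binomial tail bound) to pass from the measure to the count; and since $G$ scales like $(l/(n\eta_d f))^{1/d}\to 0$, the concentration must be stated in relative terms, with the resulting constant $c$ depending on the upper bound of $f$ on its compact support (not only on $f_{\min}$). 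Also, your $T_3$ is identically zero by the definition of $p(u)$, so the decomposition really has only two terms.
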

Next we study graph-cut induced on unweighted RMD graph. We show that the limit cut expression on RMD graph involves a much stronger and adjustable term. This implies the Cut values near modes can be significantly more expensive relative to those near valleys. For technical simplicity, we assume RMD graph ideally connects each point $x$ to its deg$(x)$ closest neighbors.
%%%%%%%%%%%%%%%%%%%%%%%%%%%%%%%%%%%%%%%%%
\begin{thm}\label{part2}
Assume the smoothness assumptions in \cite{Maier1} hold for the density $f$, and $S$ is a fixed hyperplane in $\mathbb{R}^d$. For unweighted RMD graph, let the degrees of point $x$ be: $deg(x)=k_n(\lambda+\phi(R_l(x)))$, where $\lambda$ is the constant bias, and denote the limiting expression $\rho(x):=(\lambda+\phi(p(x)))$. Assume $k_n/n\rightarrow{0}$. In case $d$=1, assume $k_n/\sqrt{n}\rightarrow\infty$; in case $d\geq$2 assume $k_n/\log{n}\rightarrow\infty$. Then as $n\rightarrow\infty$ we have that:
\begin{equation} \label{eq:rwncut}
\begin{split}
    \sqrt[d]{\frac{n}{k_n}}\text{NCut}_n(S)\longrightarrow &\\ C_d\int_S{f^{1-\frac{1}{d}}(s)\rho(s)^{1+\frac{1}{d}}\text{d}s}&\left(\mu(C^+)^{-1}+\mu(C^-)^{-1}\right).
    \end{split}
\end{equation}
satisfied almost surely. ($C_d = \frac{2\eta_{d-1}}{(d+1)\eta_d^{1+1/d}}$, $\mu(C^{\pm})=\int_{C^{\pm}}f(x)\text{d}x$.)
\end{thm}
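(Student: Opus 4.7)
The plan is to extend the limit-cut analysis of Maier, von Luxburg and Hein for standard $k$-NN graphs to accommodate the position-dependent degree $\text{deg}(x)=k_n(\lambda+\phi(R_l(x)))$. By Theorem~\ref{rank-pvalue}, $R_l(x)\to p(x)$ pointwise, so the local degree multiplier $\rho_n(x):=\lambda+\phi(R_l(x))$ converges to the deterministic $\rho(x)=\lambda+\phi(p(x))$. The first step is to promote this to uniform almost-sure convergence on the support of $f$, using the smoothness and no-flat-region assumptions on $f$, the monotonicity of $\phi$, and a standard uniform concentration bound on the $l$-nearest-neighbor statistic $G(u)$. Once $\rho_n$ is uniformly close to $\rho$, one may treat the deg-NN radius as if the degree were the deterministic $k_n\rho(x)$ up to $o(1)$ multiplicative factors.

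The core of the proof is a tube integration around $S$ that mirrors the Maier et al.\ argument but is modulated by $\rho$. The $\text{deg}(x)$-nearest-neighbor radius at $x$ satisfies
\begin{equation*}
r(x)=\left(\frac{k_n\rho(x)}{n\eta_d f(x)}\right)^{1/d}(1+o(1))
\end{equation*}
uniformly in $x$, by the standard $k$-NN radius concentration applied with $k=k_n\rho(x)$. The expected number of cut edges contributed by a point $x$ lying at signed normal distance $t$ from $S$ is asymptotically $n f(x)V_{\mathrm{cap}}(r(x),t)$, where $V_{\mathrm{cap}}$ denotes the spherical-cap volume. Integrating along the tube $\{s+t\nu(s):s\in S,\,|t|\le r(s)\}$, Taylor-expanding $f$ and $\rho$ along the normal $\nu(s)$, and evaluating the normalized cap integral $\int_{-1}^{1}V_{\mathrm{cap}}(1,u)\,\text{d}u=\tfrac{2\eta_{d-1}}{d+1}$ yields
\begin{equation*}
\text{Cut}_n(S)\sim n\left(\frac{k_n}{n}\right)^{1+\frac{1}{d}}\!\!C_d\int_S f^{1-\frac{1}{d}}(s)\rho(s)^{1+\frac{1}{d}}\text{d}s,
\end{equation*}
with $C_d=\frac{2\eta_{d-1}}{(d+1)\eta_d^{1+1/d}}$. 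Multiplying by $\sqrt[d]{n/k_n}$ recovers the stated integrand on $S$. The asymmetry of the deg-NN relation (an edge exists if either endpoint is in the other's deg-NN set) is absorbed into the same symmetrization used in the standard $k$-NN treatment and only affects the constant.

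For the balancing factor, the strong law gives $|C^{\pm}|/n\to\mu(C^{\pm})$ almost surely, producing the term $\mu(C^+)^{-1}+\mu(C^-)^{-1}$ as stated (the normalization in Eq.~(\ref{eq:rwncut}) uses the $f$-measure balancing rather than the graph-volume balancing). Combining this with the Cut asymptotic via Slutsky yields Eq.~(\ref{eq:rwncut}) almost surely.

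The main obstacle will be uniformity: both the pointwise convergence of $R_l$ from Theorem~\ref{rank-pvalue} and the concentration of the $k_n$-NN radius must be strengthened to uniform almost-sure convergence over a neighborhood of $S$, because the cut is a random sum over many points in a tube whose width itself depends on the random $\rho_n$. The growth conditions $k_n/\log n\to\infty$ for $d\ge 2$ and $k_n/\sqrt{n}\to\infty$ for $d=1$ are precisely what Maier et al.\ require for uniform $k_n$-NN radius concentration, and combined with the uniform convergence of $\rho_n$ established in the first step they should suffice to close the argument.
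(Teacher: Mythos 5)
Your treatment of the cut term is essentially the paper's own argument: the paper also proves Theorem~\ref{part2} by (i) concentrating $\text{cut}_n(S)$ around its expectation (McDiarmid) and (ii) computing that expectation (Lemma~\ref{expectation}) via the concentration of the $k_n\rho(x)$-NN radius at $\bigl(k_n\rho(x)/((n-1)\eta_d f(x))\bigr)^{1/d}$, a tube integration along the normal of $S$ with $f(x)\approx f(s)$, and the spherical-cap volume identity giving $\eta_{d-1}/(d+1)$, yielding exactly $C_d\int_S f^{1-1/d}\rho^{1+1/d}$. Your added concern about promoting the pointwise convergence $R_l(x)\to p(x)$ to uniform control of $\rho_n$ is legitimate and is in fact glossed over by the paper, which simply idealizes the degrees as $k_n\rho(x)$.

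The genuine gap is in the balancing term. The theorem is about $\text{NCut}_n$, which by the paper's definition normalizes by the \emph{graph volumes} $\text{vol}(D^{\pm})=\sum_{u\in D^{\pm},v\in D}w(u,v)$, not by the vertex counts $|D^{\pm}|$; your parenthetical claim that Eq.~(\ref{eq:rwncut}) ``uses the $f$-measure balancing rather than the graph-volume balancing'' is a misreading — the $\mu(C^{\pm})^{-1}$ factors arise as the \emph{limits} of the normalized graph volumes. Consequently the strong law for $|C^{\pm}|/n$ does not close the argument at the stated $\sqrt[d]{n/k_n}$ scaling: what is needed is the analogue of the paper's Equation~(\ref{eq:term2}), namely $\text{vol}(D^{\pm})/(nk_n)\to\mu(C^{\pm})$, which requires controlling the \emph{sum of the modulated degrees} over $D^{\pm}$ (edge lengths vanish, the count in $D^{\pm}$ is $\text{Binom}(n,\mu(C^{\pm}))$ with Chernoff, and the degrees $k_n\rho_n(u)$ must be aggregated). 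Note that because the degrees are nonuniform, $\frac{1}{nk_n}\text{vol}(D^{\pm})$ naturally tends to $\int_{C^{\pm}}\rho(x)f(x)\,\text{d}x$, so identifying this with $\mu(C^{\pm})$ needs an explicit appeal to the degree normalization (average degree $k_n$, i.e.\ $\lambda+\int_0^1\phi(t)\text{d}t=1$) or a corresponding restatement; substituting $|C^{\pm}|/n$ instead proves a RatioCut-type limit with a different $k_n$ prefactor, not the claimed NCut limit. Supplying this volume-term analysis is the missing piece; the rest of your outline tracks the paper.
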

%\noindent {\bf Remark:} Compared to the limit expression on $k$-NN graph(Equation \ref{equ:ncut_limit}), there is an additional term $\rho(x)^{1+\frac{1}{d}}=(\lambda+\phi(p(x)))^{1+\frac{1}{d}}$. For example for $\rho(x)=0.5+p(x)$, if $S$ is near modes, $p(x)\approx 1$ and this extra term becomes $(1.5)^{1+\frac{1}{d}}$. On the other hand this term becomes $(0.5)^{1+\frac{1}{d}}<1$ if $S$ is near valleys of pdf. This implies the graph-cut value near modes can be significantly more expensive relative to those near valleys. What's more, the adjustable $\rho(x)$ provides flexible control of degrees of the graph, and can be adapted to deal with even more unbalanced data sets(see Fig.\ref{fig:USPS8v9} in Sec.\ref{sec:experiment}).
%%%%%%%%%%%%%%%%%%%%%%%%%%%%%%%%%%%%%%%%%%
%%%%%%%%%%%%%%%%%%%%%%%%
\begin{proof}
We only present a brief outline of the proof. We want to establish the convergence result of the cut term and the balancing terms respectively, that is:
\begin{align}
    \frac{1}{nk_n}\sqrt[d]{\frac{n}{k_n}}\text{cut}_n(S)
    &\rightarrow C_d\int_S{f^{1-\frac{1}{d}}(s)\rho(s)^{1+\frac{1}{d}}\text{d}s}. \label{eq:term1}\\
      nk_n\frac{1}{\text{vol}(D^\pm)}&\rightarrow
    \frac{1}{\mu(C^\pm)}. \label{eq:term2}
    %n\frac{1}{|D^\pm|} &\rightarrow \frac{1}{2\mu(C^{\pm})} \label{eq:term3}
\end{align}
where $D^+(D^-)=\{x\in{D}: x\in{C^+}(C^-)\}$ are the discrete version of $C^+(C^-)$.

Equation~\ref{eq:term1} is established in two steps. First we can show that the LHS cut term converges to its expectation $\mathbb{E}\left(\frac{1}{nk_n}\sqrt[d]{\frac{n}{k_n}}\text{cut}_n(S)\right)$ by making use of the concentration of measure inequality \cite{McDiarmid89}. Second we show that this expectation term actually converges to the RHS of
Equation~\ref{eq:term1}. This is the most intricate part and we state it as a separate result in Lemma~\ref{expectation}.

%To establish \ref{eq:term3}, the idea is that  the number of points in $D^+$ is binomially distributed $\text{Binom}(n,\mu(C^+))$. Using the Chernoff bound of binomial sum we can show that almost surely Equation \ref{eq:term3} holds true.

For equation \ref{eq:term2}, recall that the volume term of $D^+$ is $\text{vol}(D^+)=\sum_{u\in{D^+},v\in{D}}1$. It can be shown that as $n\rightarrow\infty$, the distance between any connected pair $(u,v)$ goes to zero. Next we note that  the number of points in $D^+$ is binomially distributed $\text{Binom}(n,\mu(C^+))$. Using the Chernoff bound of binomial sum we can show that almost surely Equation \ref{eq:term2} holds true.
\end{proof}
%%%%%%%%%%%%%%%%%%%%%%%%%%%%%%%%%%%%%%%%%
%%%%%%%%%%%%%%%%%%%%%%%%
\begin{lem}\label{expectation}
%Assume the general assumptions (see \cite{Maier1}) hold and $S$ is a fixed hyperplane in $\mathbb{R}^d$. For unweighted RMD graph, let the degrees of point $x$ be: $deg(x)=k_n(\lambda+2(1-\lambda)R_l(x))$ and denote the limiting expression $\rho(x):=(\lambda+2(1-\lambda)p(x))$.  Assume $k_n/n\rightarrow{0}$ and $k_n/\log{n}\rightarrow\infty$ as $n\rightarrow\infty$.
Given the assumptions of Theorem \ref{part2},
\begin{equation*}
    \mathbb{E}\left(\frac{1}{nk_n}\sqrt[d]{\frac{n}{k_n}}\text{cut}_n(S)\right)\longrightarrow C_d\int_S{f^{1-\frac{1}{d}}(s)\rho(s)^{1+\frac{1}{d}}\text{d}s}.
\end{equation*}
where $C_d=\frac{2\eta_{d-1}}{(d+1)\eta_d^{1+1/d}}$.
\end{lem}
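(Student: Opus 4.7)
The plan is to adapt the limit-cut argument of \cite{Maier1} for $k$-NN graphs to the case of the spatially varying, rank-modulated degree $\text{deg}(x)=k_n(\lambda+\phi(R_l(x)))$. I would first rewrite $\mathbb{E}[\text{cut}_n(S)]$ as $\mathbb{E}[A_n+B_n-M_n]$, where $A_n$ (resp.\ $B_n$) counts unordered pairs $\{x,y\}$ with $x\in D^+$, $y\in D^-$ and $y\in N_{\text{deg}(x)}(x)$ (resp.\ $x\in N_{\text{deg}(y)}(y)$), and $M_n$ is the intersection, i.e.\ mutual-NN crossings. By Theorem~\ref{rank-pvalue} we have $R_l(x)\to p(x)$ and hence $\text{deg}(x)/k_n\to\rho(x)=\lambda+\phi(p(x))$, which is the new ingredient beyond \cite{Maier1}.

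For $A_n$, conditioning on $x$ and the remaining $n-1$ samples, I would introduce a deterministic effective radius $r_n(x)$ by $n\int_{B(x,r_n(x))}f(y)\,dy=k_n\rho(x)$; smoothness of $f$ then yields
\begin{equation*}
    r_n(x)=\left(\frac{k_n\rho(x)}{n\eta_d f(x)}\right)^{1/d}(1+o(1)),
\end{equation*}
and standard $k$-NN distance concentration together with the rank convergence allow me to replace the random $\text{deg}(x)$-th NN distance by $r_n(x)$, uniformly on a compact slab near $S$. For $x$ at signed distance $t\in(0,r_n(x))$ from $S$, the expected number of $x$'s neighbors in $C^-$ is $\text{deg}(x)\,h(t/r_n(x))(1+o(1))$, where $h(u)=(\eta_{d-1}/\eta_d)\int_u^1(1-v^2)^{(d-1)/2}\,dv$ is the normalized spherical-cap fraction.

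Parametrizing the slab by $x=s+t\,\nu_S$ with $s\in S$, using that only $t=O(r_n(s))\to 0$ contributes (so $f$ and $\rho$ can be evaluated at $s$), and computing $\int_0^1 h(u)\,du=\eta_{d-1}/(\eta_d(d+1))$ by Fubini together with $\int_0^1 t(1-t^2)^{(d-1)/2}\,dt=1/(d+1)$, I obtain
\begin{equation*}
    \mathbb{E}[A_n]\approx n\int_S f(s)\,\text{deg}(s)\,r_n(s)\cdot\frac{\eta_{d-1}}{\eta_d(d+1)}\,ds.
\end{equation*}
The analogous computation for $B_n$ (integrating $y\in C^-$ and taking the cap on the $C^+$ side) produces an identical limiting integrand, so $\mathbb{E}[A_n+B_n]$ supplies the factor of $2$ in $C_d$. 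Substituting $\text{deg}(s)=k_n\rho(s)$ and the expression for $r_n(s)$, the product $\text{deg}(s)\,r_n(s)$ carries $k_n^{1+1/d}\rho(s)^{1+1/d}/(n\eta_d f(s))^{1/d}$; multiplying by the normalizing factor $\sqrt[d]{n/k_n}/(nk_n)$ cancels all powers of $n$ and $k_n$ and leaves precisely $C_d\int_S f(s)^{1-1/d}\rho(s)^{1+1/d}\,ds$ with $C_d=2\eta_{d-1}/((d+1)\eta_d^{1+1/d})$.

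The main obstacle is a uniform-in-the-slab replacement of the random objects $R_l(x)$ and the $\text{deg}(x)$-th NN distance by their deterministic limits: the pointwise rank consistency of Theorem~\ref{rank-pvalue} has to be upgraded to a uniform-on-compacts (or at least mean) statement, and combined with Bernstein-type $k$-NN distance concentration, so that these replacements commute with the expectation and remain negligible after the $\sqrt[d]{n/k_n}/(nk_n)$ normalization. A secondary technical point is bounding the mutual-edge count $\mathbb{E}[M_n]$; this should be $o(\mathbb{E}[A_n])$ because a mutual crossing requires both $y\in N_{\text{deg}(x)}(x)$ and $x\in N_{\text{deg}(y)}(y)$, whose conditional probabilities each scale with the cap volume and whose intersection is of strictly smaller order.
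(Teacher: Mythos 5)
Your overall route is the same as the paper's: reduce to a per-pair/per-point expected count, replace the random $\deg(x)$-th NN distance by the deterministic effective radius $r_n(x)=\bigl(k_n\rho(x)/(n\eta_d f(x))\bigr)^{1/d}$ using concentration plus the rank consistency of Theorem~\ref{rank-pvalue}, decompose the integral into $S$ and its normal direction, evaluate the spherical-cap integral to get $\eta_{d-1}/(d+1)$, and let the normalization $\frac{1}{nk_n}\sqrt[d]{n/k_n}$ cancel the powers of $n,k_n$. That part matches the paper's proof (which follows \cite{Maier2}) step for step.

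The genuine gap is your treatment of the mutual-edge count $M_n$. You argue $\mathbb{E}[M_n]=o(\mathbb{E}[A_n])$ because the two conditional probabilities ``each scale with the cap volume and whose intersection is of strictly smaller order''---but these two events are strongly positively correlated, not near-independent. For a cross pair with $\|x-y\|$ of order $r_n$, continuity of $f$ and of $\rho$ gives $r_n(x)=r_n(y)(1+o(1))$, and since $k_n\gg\log n$ the $\deg$-th NN radii concentrate at relative scale $O(1/\sqrt{k_n})\to 0$; hence the events $\{y\in N_{\deg(x)}(x)\}$ and $\{x\in N_{\deg(y)}(y)\}$ asymptotically coincide except for pairs in a vanishing boundary shell, so $\mathbb{E}[M_n]\sim\mathbb{E}[A_n]\sim\mathbb{E}[B_n]$. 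Consequently, under your own accounting the symmetrized cut satisfies $A_n+B_n-M_n\approx A_n$, which would produce \emph{half} of the stated constant $C_d$; the factor $2$ in $C_d$ in Lemma~\ref{expectation} comes from the paper's convention, built into its idealization that each point is connected to its $\deg(x)$ closest neighbors, of writing $\mathbb{E}(\text{cut}_n(S))=n\mathbb{E}_x(\mathbb{E}(\text{cut}_x))$ with $x$ ranging over both sides of $S$---i.e.\ exactly $\mathbb{E}[A_n+B_n]$ with no subtraction of $M_n$. So your final constant agrees with the paper only because the erroneous step (dropping $M_n$) compensates for using the symmetrized-pair count; to be correct you must either adopt the paper's directed/idealized edge convention and not introduce $M_n$ at all, or keep the OR-rule simple graph and carry $M_n\sim A_n$ through, which changes the constant. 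A secondary (smaller) point: you correctly note that pointwise rank consistency must be upgraded to a uniform (or in-mean) statement over the slab; the paper glosses this too, so it is not a divergence from the paper, but it remains an unproved step in your outline.
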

%%%%%%%%%%%%%%%%%%%%%%%%%%%
%Proof of Lemma \ref{expectation} can be found in the supplementary material.

%%%%%%%%%%%%%%%%%%%%%%%%%%%%%%%%%%%%%%%%%%
\section{Simulations}\label{sec:experiment}
%%%%%%%%%%%%%%%%%%%%%%%%%%%%%%%%%%%%%%%%%%
We present experiments to show the power of RMD graph. We focus on the unbalanced settings by sampling the data set in an unbalanced way. First, we use U-statistic resampling technique and obtain averaged ranks with reduced variance~\cite{Korolyuk94}.

\textbf{U-statistic Resampling For Rank Computation:} We input $n=2m$ data points, nearest-neighbor parameter $l$, number of resampling times $b$. We then Randomly split the data set into two equal parts: $S_1=\{x_1,...,x_m\}$, $S_2=\{x_{m+1},...,x_{2m}\}$. Then points points in $S_2$ are used to calculate the statistics $G(x_i)$ of $x_i \in S_1$ and vice versa. The ranks of $x_i \in S_1$ within $S_1$ and ranks of $x_i\in S_2$ within $S_2$, based on Eq.\ref{eq:grank}. We then resample and repeat the steps $b$ times and average to obtain averaged ranks for the data points. The algorithm turns out to be robust to the nearest-neighbor parameter $l$. In our simulations we select $l$ to be of the order of $\sqrt{m}$. We set the parameter of resampling times $b=10$ in our experiments. Note that the complexity of rank calculation is $O(bn^2\text{log}n)$.

Other general parameters are specified below:
%\begin{enumerate}
  %\item
  {\bf (1)} In the rank $R(u)$, we choose the statistic $G(u)$ as in Eqn.\ref{equ:G(u)}. The ranks are quite robust to the choice of parameter $l$; here we fix $l=50$. \\
  {\bf (2)} In the step of U-statistic rank calculation, we fix the resampling times $b=10$.\\
 {\bf (3)} We adopt three RMD schemes: (a) deg$(u)=k(1/2+R(u))$; (b) deg$(u)=k(1/3+2R^2(u))$; (c) deg$(u)=k(1/4+3R^3(u))$. The average degree $k$(same as in $k$-NN or $b$-matching) will be specified later. \\
 {\bf (4)} For graph construction, both methods described in Sec.\ref{subsec:degree} are applied to build RMD graphs. We find the performances are almost the same. So for consideration of complexity, we recommend the simple $k$-NN style algorithm to build RMD graphs. \\
{\bf (5)} For cross-validation, we find all clusters returned on RMD graphs of step (3) are sizable. Among these we pick the partition with the minimum Cut value. \\
{\bf (6)} All error rate results are averaged over 20 trials.
%\end{enumerate}

%%%%%%%%%%%%%%%%%%%%%%%%%%%%%%%%%%%%%%%%%%
\subsection{Synthetic Experiments}\label{subsec:syn}
\begin{figure*}[!htb]
\begin{centering}
\begin{minipage}[t]{.32\textwidth}
\includegraphics[width = 1\textwidth]{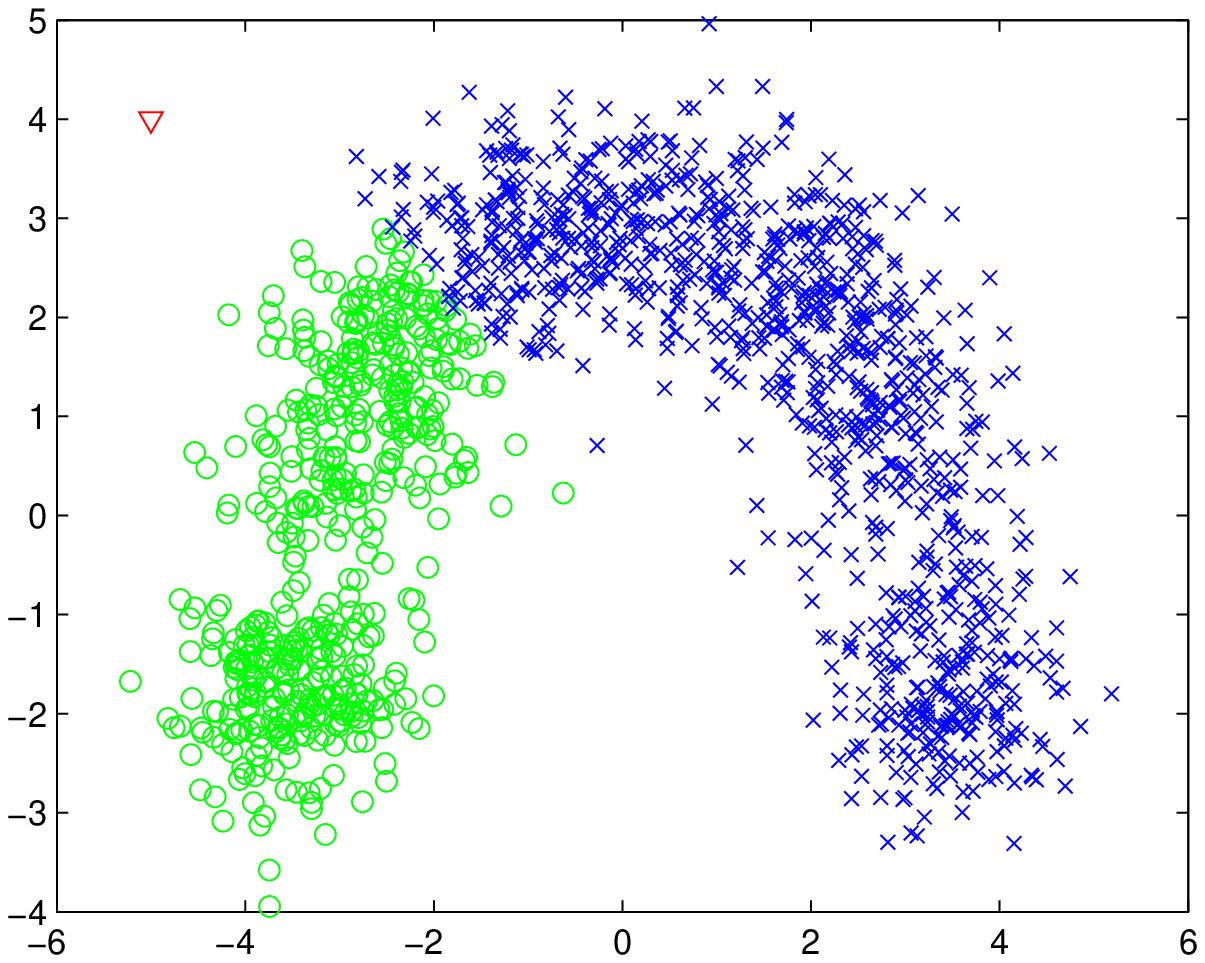}
\makebox[5 cm]{\small (a) $\epsilon$-graph}
\end{minipage}
\begin{minipage}[t]{.32\textwidth}
\includegraphics[width = 1\textwidth]{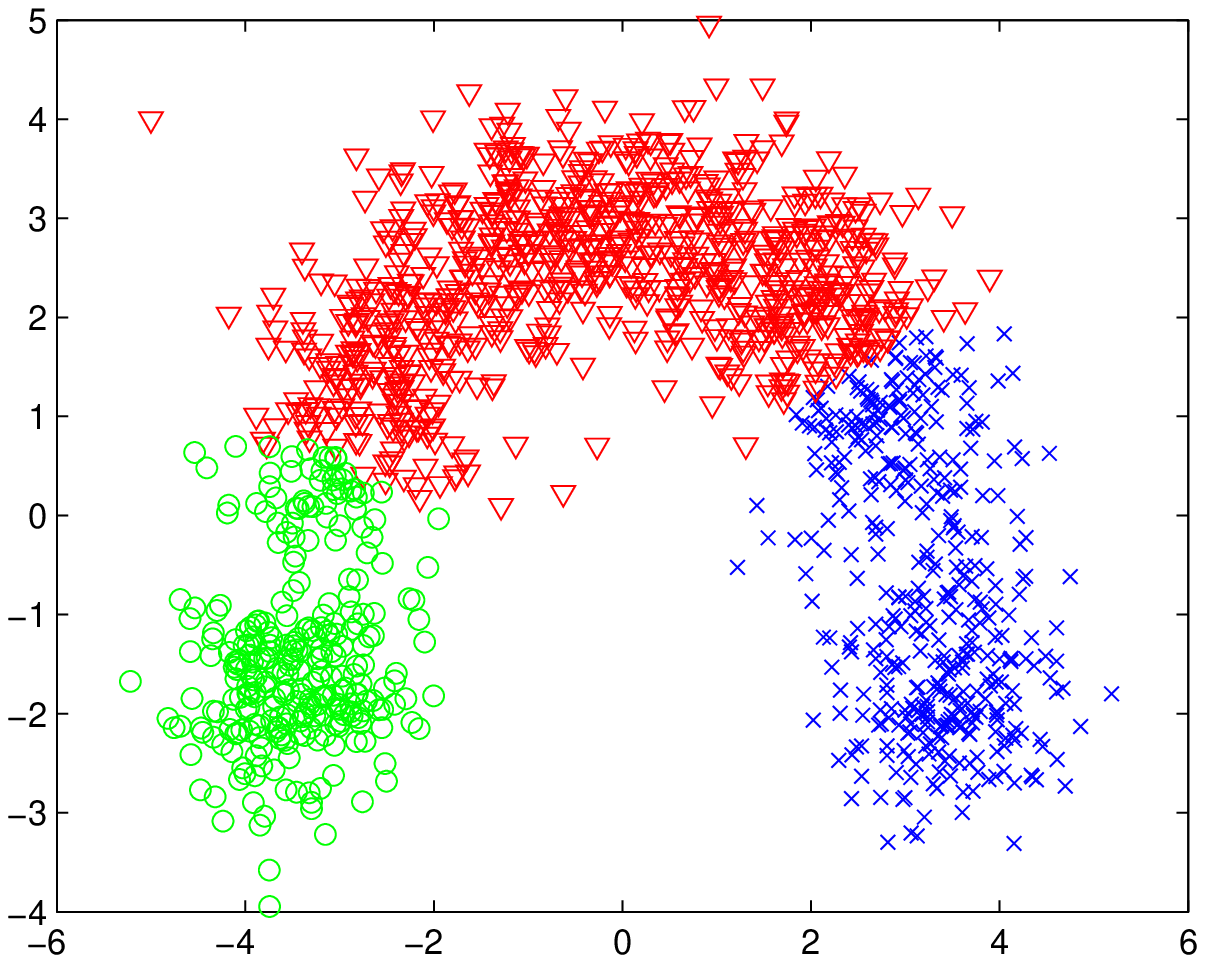}
\makebox[5 cm]{\small (b) $k$-NN graph}
\end{minipage}
\begin{minipage}[t]{.32\textwidth}
\includegraphics[width = 1\textwidth]{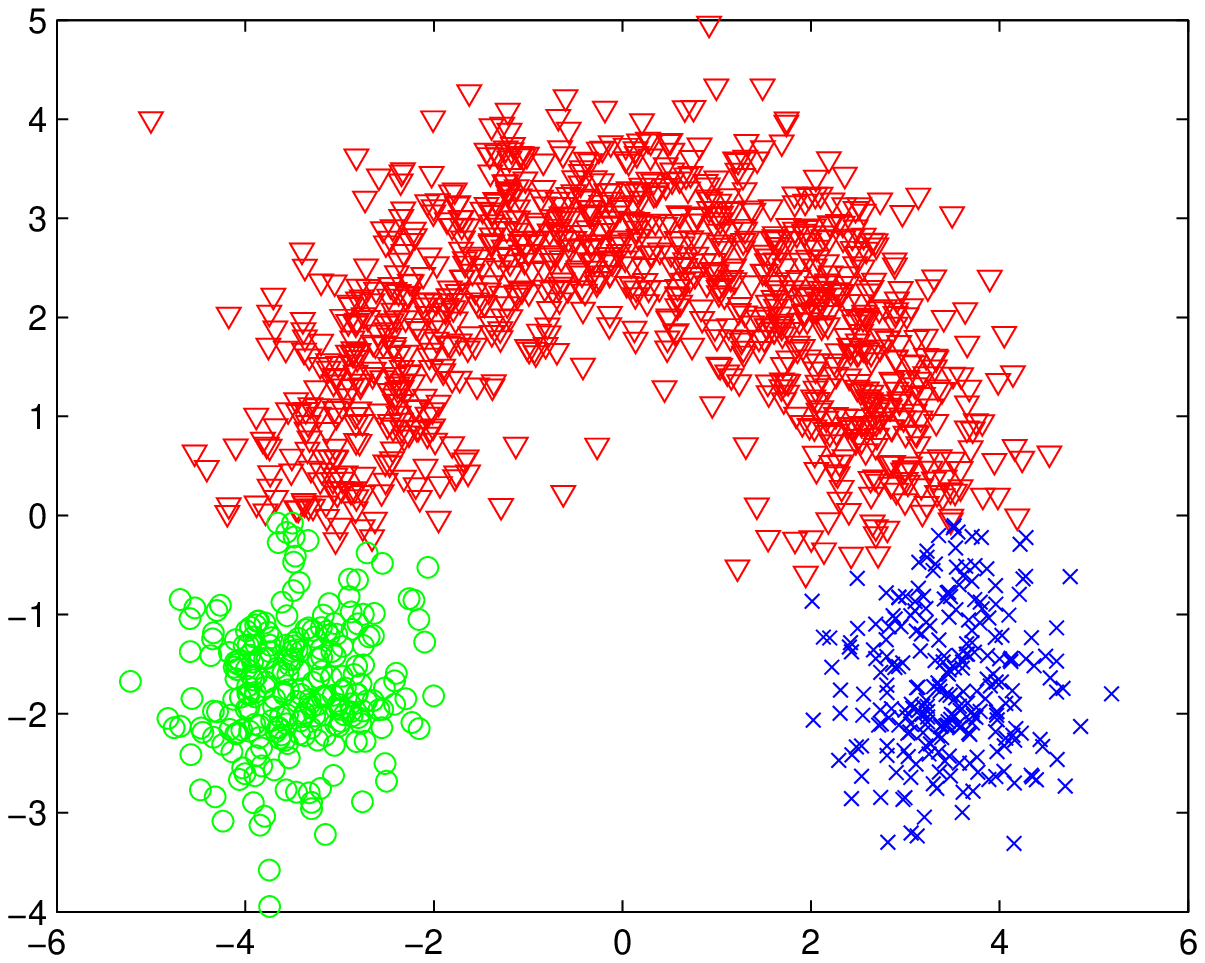}
\makebox[5 cm]{\small (c) RMD graph}
\end{minipage}
\caption{\small 2 gaussian and 1 banana-shaped proximal data set. SC fails on $\epsilon$-graph due to the outlier cluster, cuts at the balanced positions on $k$-NN graph while at the valley on RMD graph.}
\label{fig:3cluster}
\end{centering}
\end{figure*}
\begin{figure}[htb]
\begin{centering}
\begin{minipage}[t]{.23\textwidth}
\includegraphics[width = 1\textwidth]{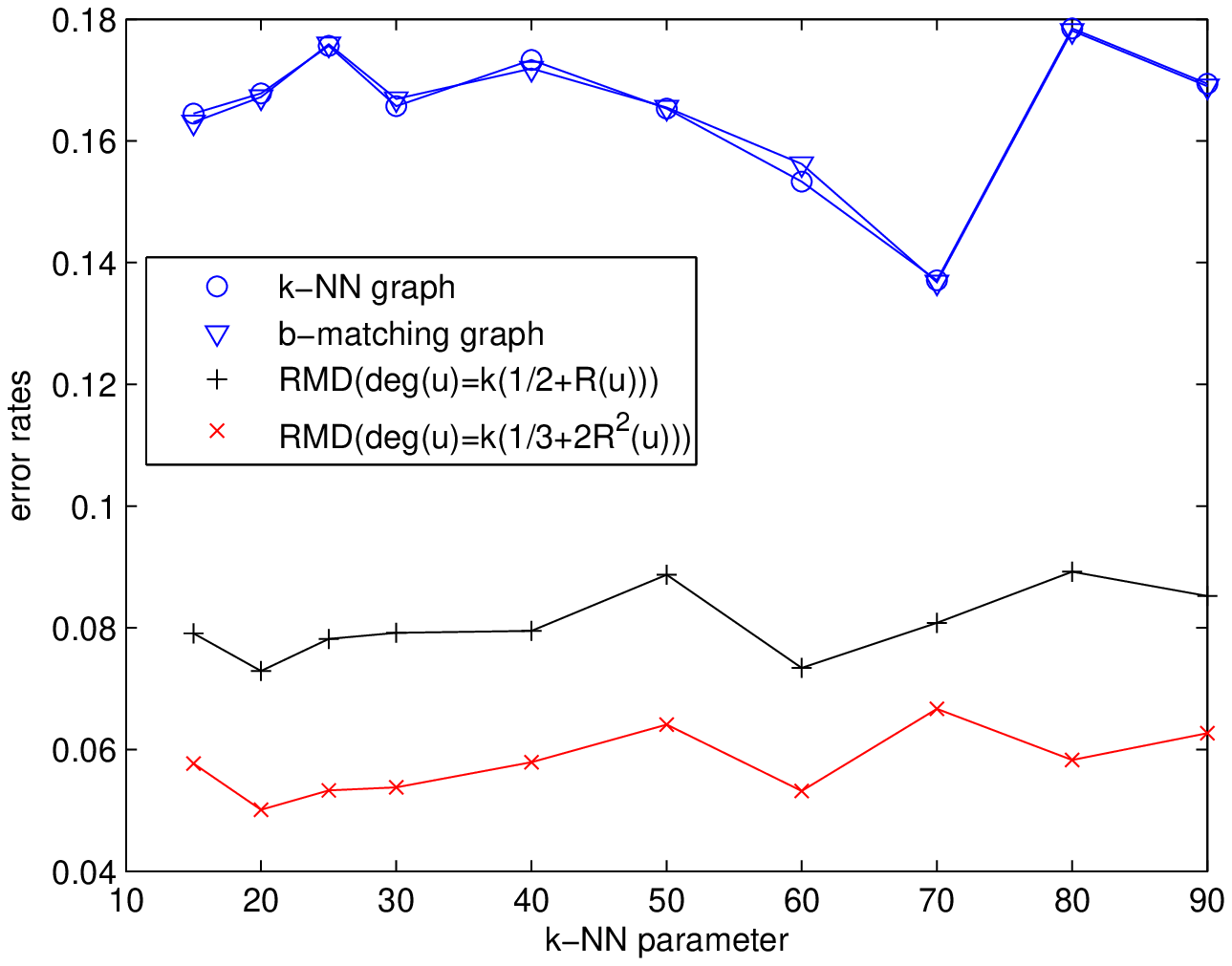}
\makebox[4 cm]{\small (a) SC on various graphs}
\end{minipage}
\begin{minipage}[t]{.23\textwidth}
\includegraphics[width = 1\textwidth]{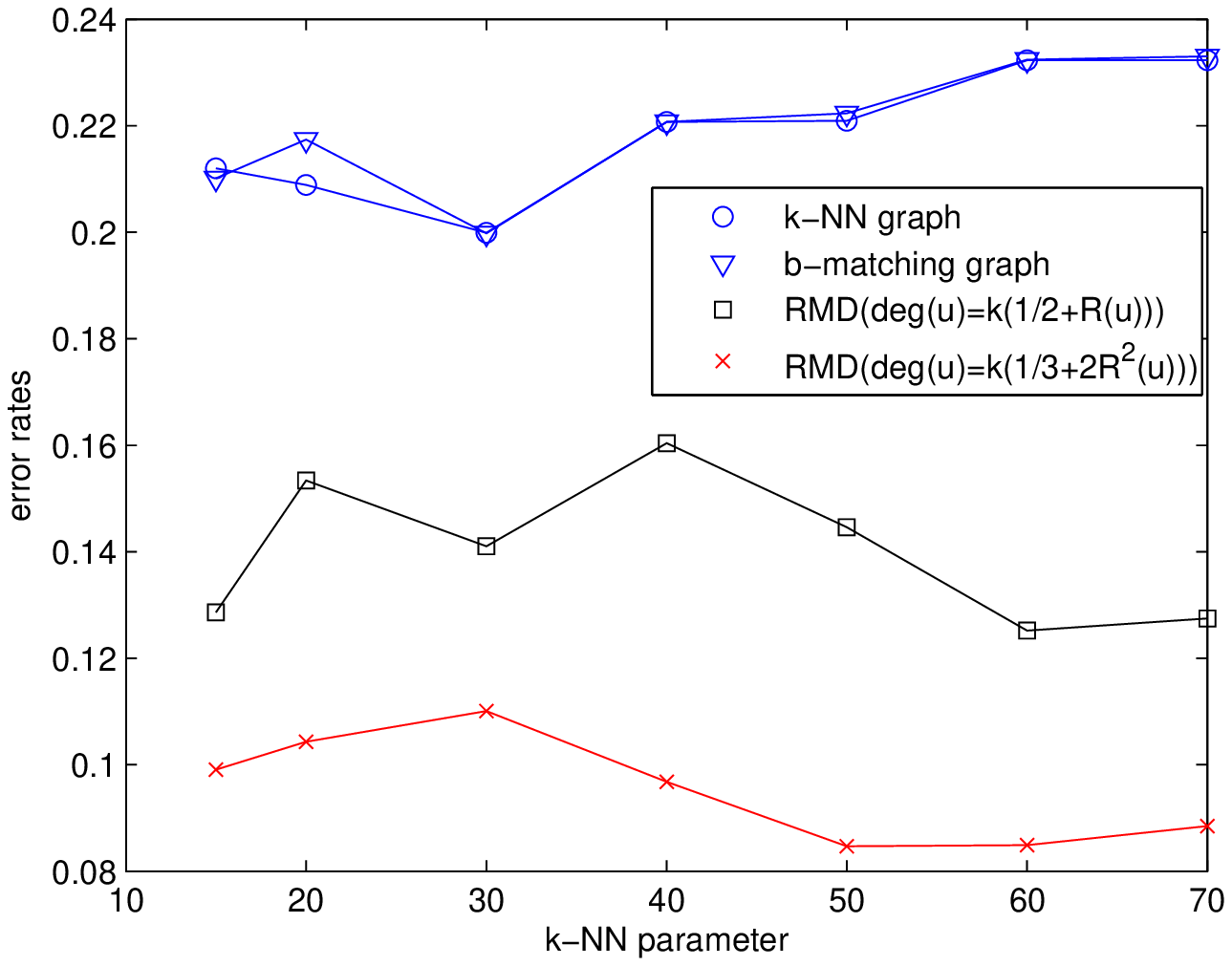}
\makebox[4 cm]{\small (b) GTAM on various graphs}
\end{minipage}
\caption{\small SC and GTAM on unbalanced pdf of Fig.\ref{fig:2g_graph}. RMD graph outperforms $k$-NN or $b$-matching graph.}
\label{fig:2g_result}
\end{centering}
\end{figure}
\textbf{2D Examples:}
We revisit the example discussed in Sec.\ref{sec:intro_motiv}.
%Fig.\ref{fig:2g_graph} has shown various graphs and the clustering results. The global structures, i.e. valleys/modes of pdf, are exhibited by edges of RMD and R2MD graphs.
We apply SC, GRF and GTAM on this unbalanced density; the results are shown in Fig.\ref{fig:2g_result}. Sample size $N=500$; binary weights are adopted. For GRF and GTAM there are 20 randomly chosen labeled samples guaranteeing at least one from each cluster, and GTAM parameter $\mu=0.05$. In both experiments RMD graph significantly outperforms $k$-NN and $b$-matching graphs. Notice that different RMD graphs have different power to cope with unbalanced data.

\textbf{Multiple Cluster Example:}
Consider a data set composed of 2 gaussian and 1 banana-shaped proximal clusters. We manually add an outlier point. SC fails on $\epsilon$-graph(similar on full-RBF graph) because the outlier point forms a singleton component. SC cuts at the balanced positions on $k$-NN($b$-matching) graph instead of the valley on RMD graph.

\begin{figure*}[htb]
\begin{centering}
\begin{minipage}[t]{.23\textwidth}
\includegraphics[width = 1\textwidth]{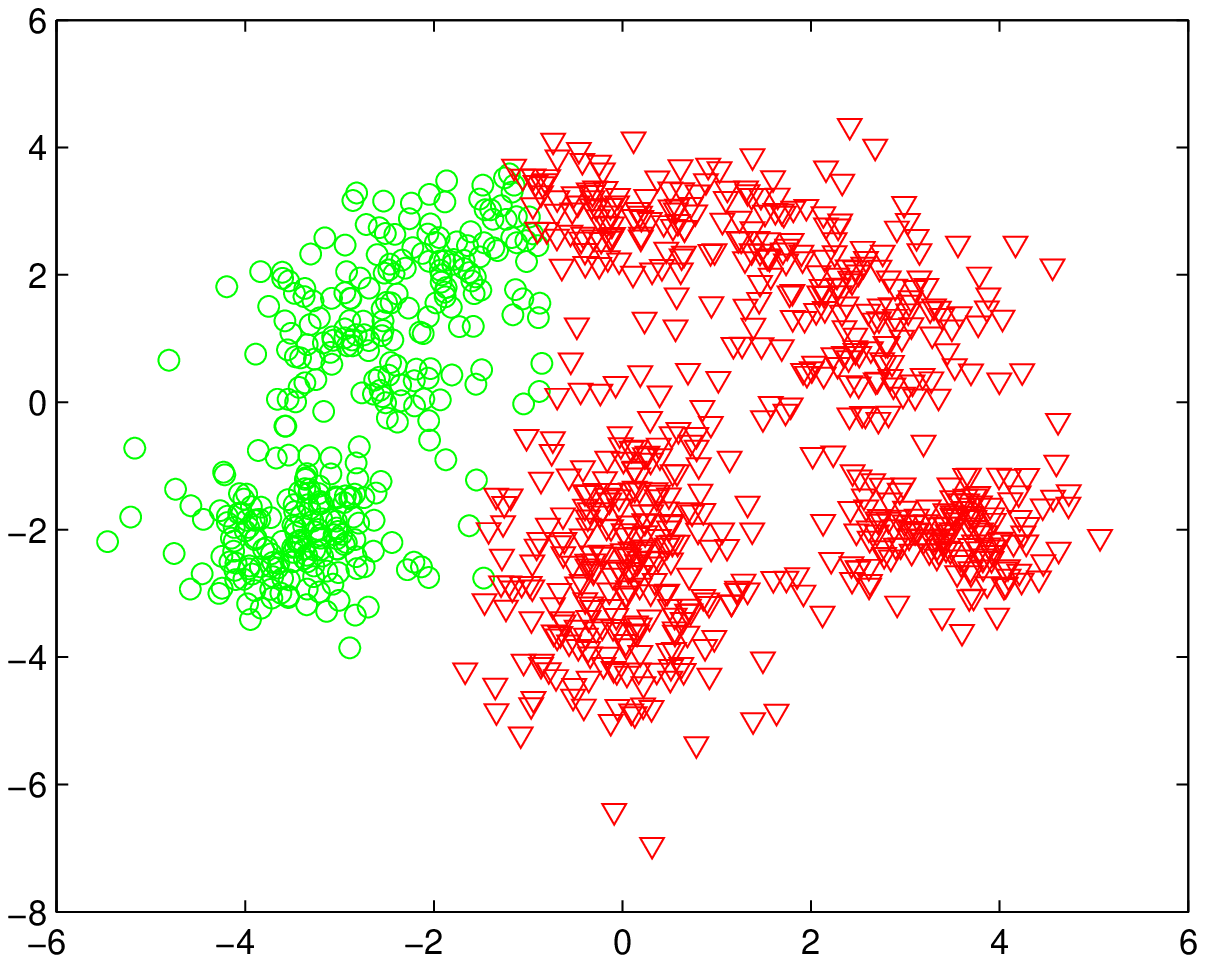}
\makebox[3.5 cm]{\small (a) 1st partition($k$-NN)}
\end{minipage}
\begin{minipage}[t]{.23\textwidth}
\includegraphics[width = 1\textwidth]{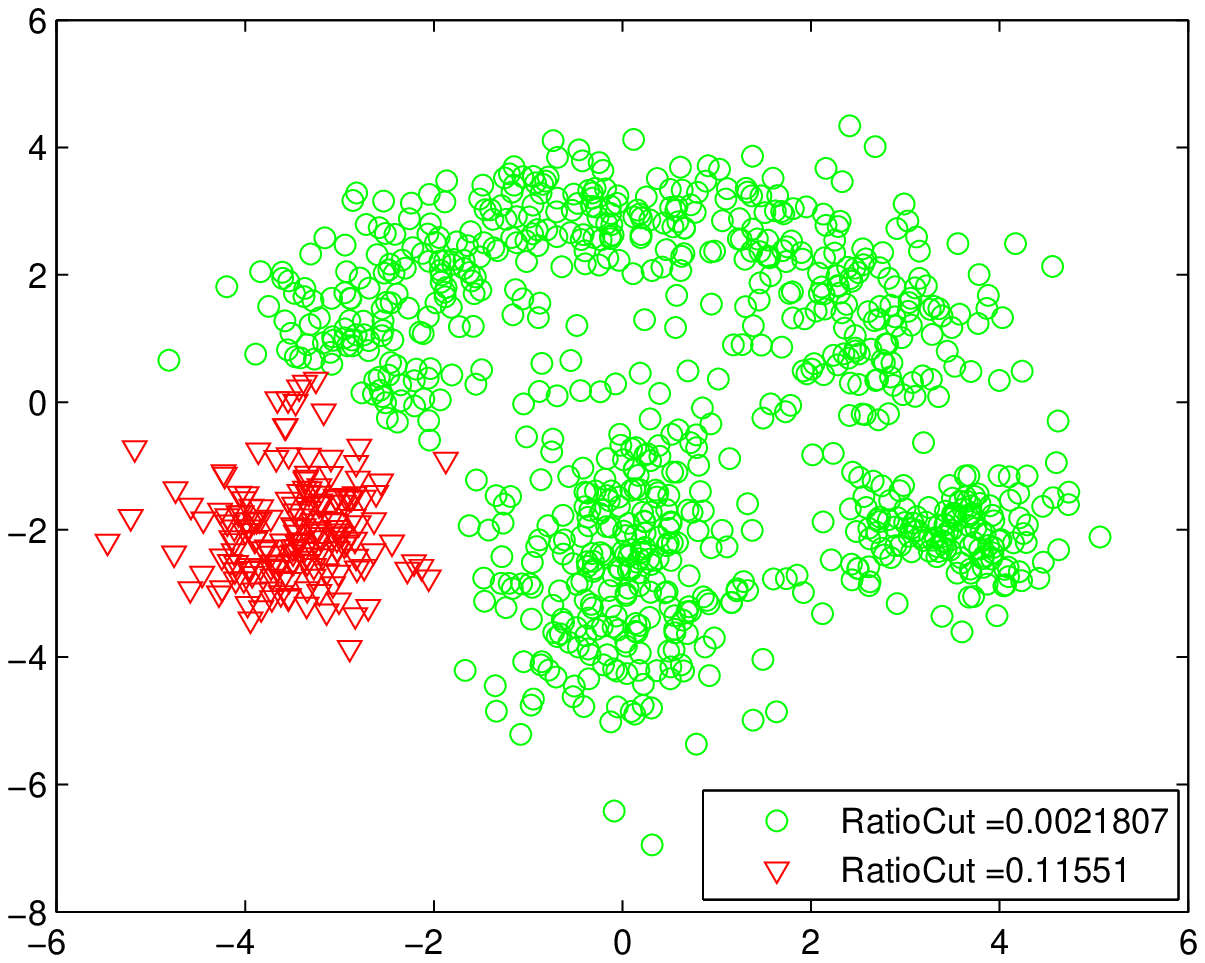}
\makebox[3.5 cm]{\small (b) 1st partition(RMD)}
\end{minipage}
\begin{minipage}[t]{.23\textwidth}
\includegraphics[width = 1\textwidth]{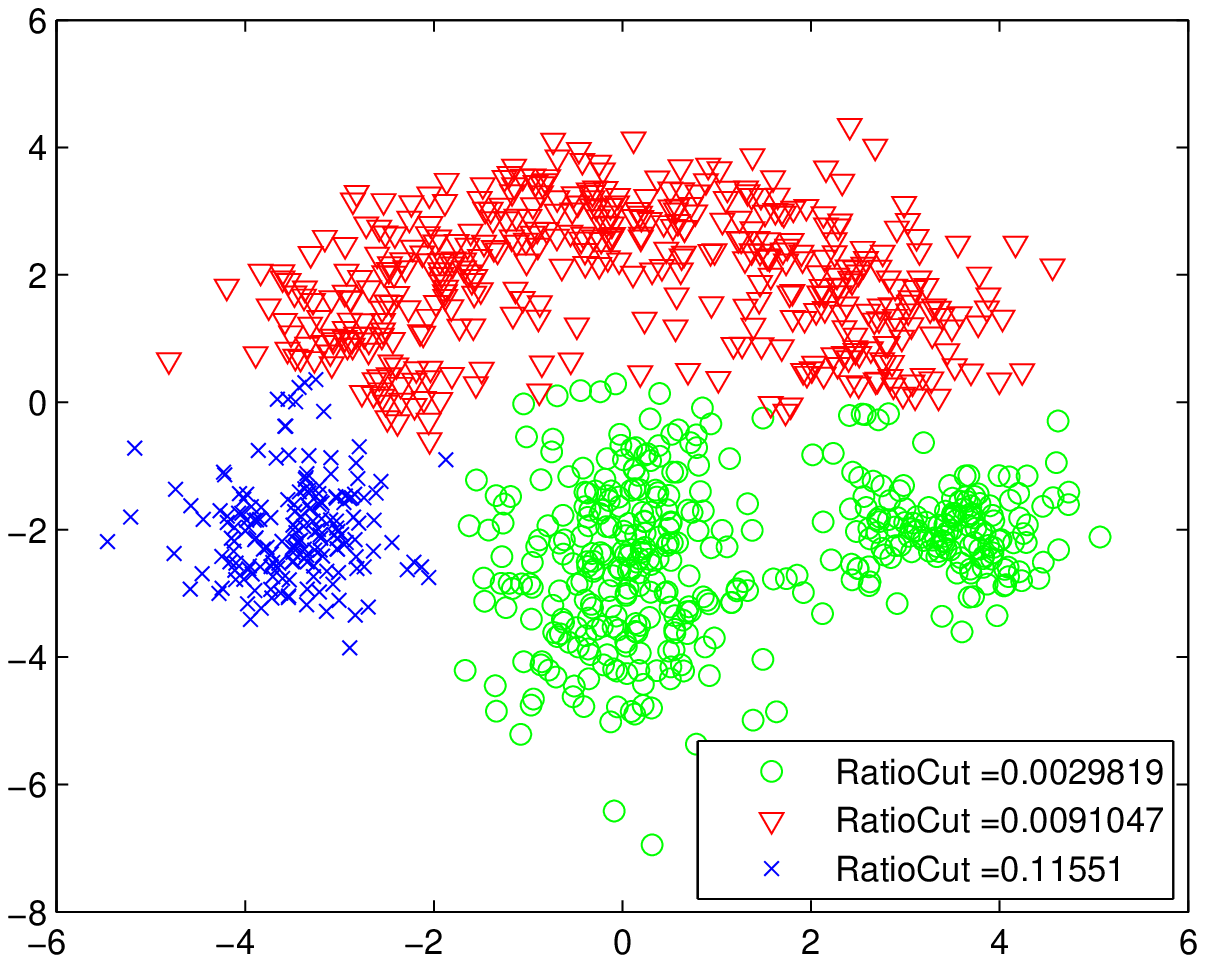}
\makebox[3.5 cm]{\small (c) 2nd partition(RMD)}
\end{minipage}
\begin{minipage}[t]{.23\textwidth}
\includegraphics[width = 1\textwidth]{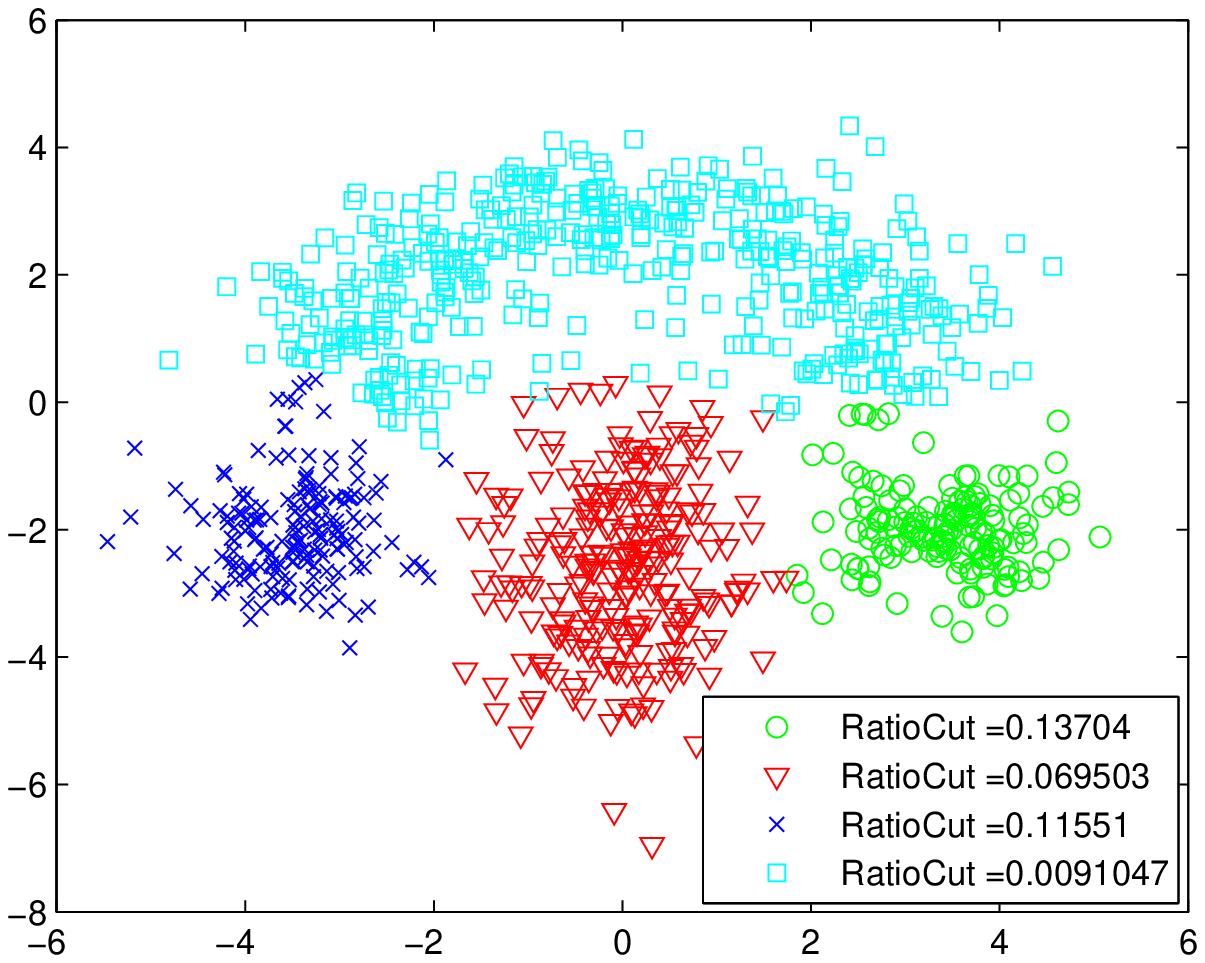}
\makebox[3.5 cm]{\small (d) 3rd partition(RMD)}
\end{minipage}
\caption{\small Divisive Hierarchical SC is performed on $k$-NN and RMD graph of 4-cluster data. Binary cuts are performed until the number of clusters reaches 4. Notice every step involves unbalanced data. $k$-NN graph fails at the first step. On RMD graph valley cuts are attained at each step.}
\label{fig:hierarchical}
\end{centering}
\end{figure*}
\textbf{Divisive Hierarchical Clustering:}
For situations that require a structural view of the data set, we propose a divisive hierarchical way of performing SC. This is possible because our graph sparsification accounts for unbalanced data and so we can use spectral clustering on RMD for divisive clustering. At every step, the algorithm tries to split each existing part into 2 clusters and computes the corresponding graph cut values.
The part with the smallest binary cut value is split until the expected number of clusters is reached.
Fig.\ref{fig:hierarchical} shows a synthetic example composed of 4 clusters. SC on $k$-NN graph fails at the first cut due to unbalancedness. On RMD graph at each step the valley cut is attained for the sub-cluster from the previous step with the smallest RatioCut value.

%%%%%%%%%%%%%%%%%%%%%%%%%%%%%%%%%%%%%%%%%%
\subsection{Real DataSets}\label{subsec:real}
%%%%%%%%%%%%%%%%%%%%%%%%%%%%%%%%%%%%%%%%%%
\begin{figure*}[!htb]
\begin{centering}
\begin{minipage}[t]{.32\textwidth}
\includegraphics[width = 1\textwidth]{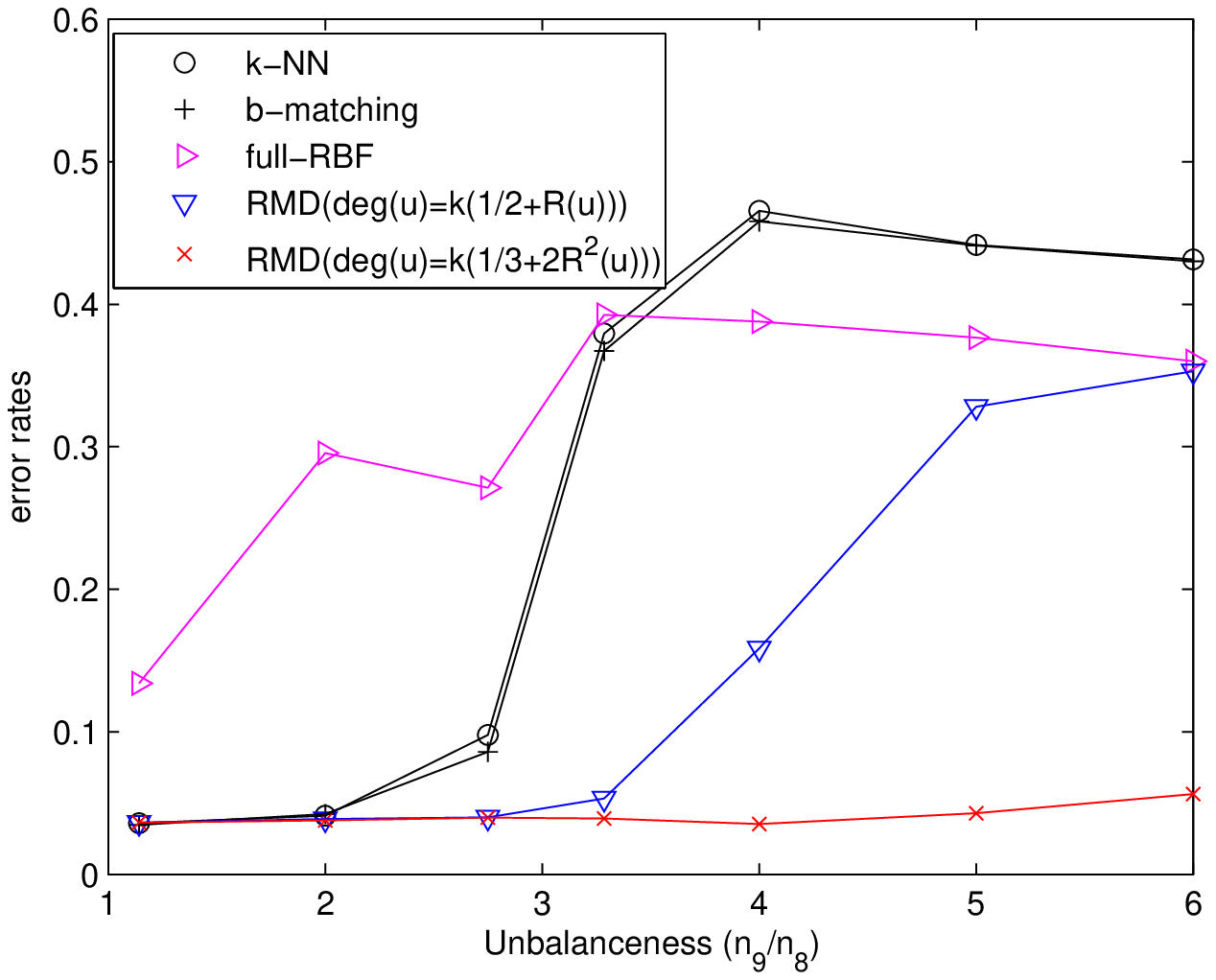}
\makebox[5 cm]{\small (a) SC on USPS 8vs9}
\end{minipage}
\begin{minipage}[t]{.32\textwidth}
\includegraphics[width = 1\textwidth]{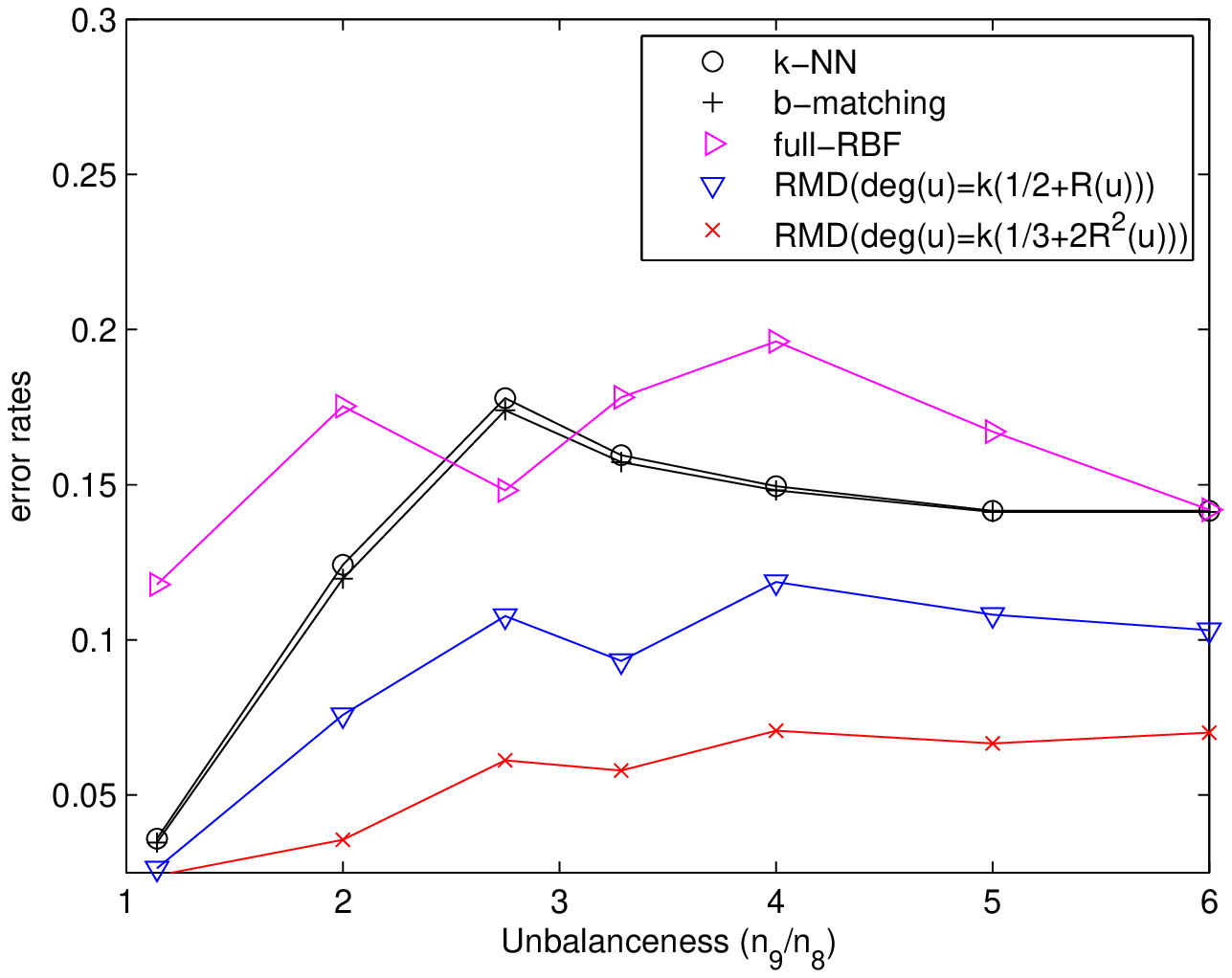}
\makebox[5 cm]{\small (b) GRF on USPS 8vs9}
\end{minipage}
\begin{minipage}[t]{.32\textwidth}
\includegraphics[width = 1\textwidth]{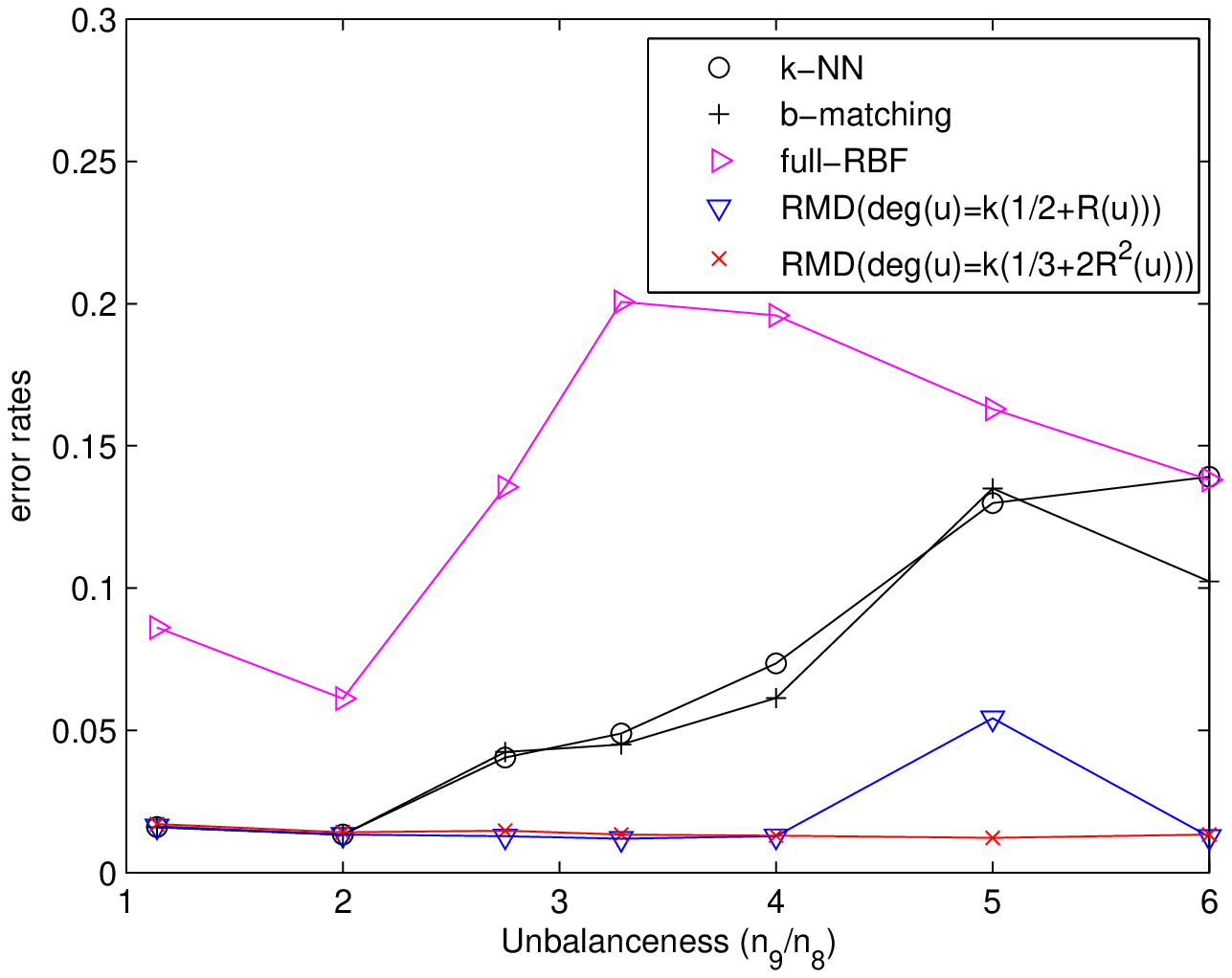}
\makebox[5 cm]{\small (c) GTAM on USPS 8vs9}
\end{minipage}
\caption{\small SC, GRF and GTAM on 8vs9 of USPS digit dataset with various mixture proportions. $k$-NN or $b$-matching graphs fail. RMD graph with deg$(u)=k(1/3+2R^2(u))$ can adapt to unbalanced data better than with deg$(u)=k(1/2+R(u))$.}
\label{fig:USPS8v9}
\end{centering}
\end{figure*}
\begin{table*}[!htb]
\caption{\small SC on various graphs of unbalanced real data sets. Results of RMD are obtained after cross-validation. RMD graph performs significantly better than other methods.}
\label{tab:real_SC}
\begin{center}
\begin{tabular}{|c||c|c|c|c|c|c|c|}
  \hline
  % after \\: \hline or \cline{col1-col2} \cline{col3-col4} ...
  Err Rates(\%) & $k$-NN & $b$-match & full-RBF & RMD \\
  \hline\hline
  Waveform 1vs2 & 12.83 &  11.03  &  10.35  &  6.45 \\
  \hline
  Pendigit 6vs8 & 14.95  & 15.00  &  10.61  &  4.57 \\
  Pendigit 9vs8 & 12.23  & 12.33  &  14.53  &  3.53 \\
  \hline
  USPS 8vs6 & 4.67  &  4.52  &  22.83 &  1.13  \\
  USPS 8vs9 & 47.05 &  46.47 &  19.95 &  3.40 \\
  \hline
  SatLog 1vs7 & 7.28  &  7.44   &  9.12   &  4.59 \\
  SatLog 4vs3 & 20.20 &  17.79  &  13.69  &  8.48 \\
  \hline
  LetterRec 6vs7 & 12.45 &  10.95  &  25.01 &  3.20 \\
  LetterRec 4vs10 & 28.29 &  29.85 &  25.19 &  20.60 \\
  \hline
\end{tabular}
\end{center}
\end{table*}
\begin{table*}[!htb]
\caption{\small GRF and GTAM on various graphs of unbalanced UCI data sets. Results of RMD are obtained after cross-validation. RMD graph performs significantly better than other methods.}
\label{tab:real_SSL}
\begin{center}
\begin{tabular}{|c||c|c|c|c|c|}
  \hline
  % after \\: \hline or \cline{col1-col2} \cline{col3-col4} ...
  Error Rates(\%) & USPS 8vs6 & SatLog 4vs3  & Pendigits 6vs8 & Pendigits 8vs9 & LetterRec 6vs7 \\
  \hline\hline
  GRF-kNN   & 11.53 & 13.97 & 2.83  & 5.70  & 9.23 \\
  GRF-bm    & 11.31 & 13.81 & 3.05  & 5.65  & 9.18 \\
  GRF-full  & 19.85 & 17.50 & 12.00 & 13.09 & 16.69 \\
  GRF-RMD  & 5.96  & 12.62 & 2.12  & 4.11  & 6.22 \\
  \hline
  GTAM-kNN  & 4.86  & 31.81 & 11.96 & 7.27  & 18.42 \\
  GTAM-bm   & 4.65  & 29.08 & 11.92 & 6.93  & 18.84 \\
  GTAM-full & 18.39 & 21.47 & 18.32 & 11.74 & 19.39 \\
  GTAM-RMD & 1.32  & 23.41 & 6.86  & 5.44  & 15.23 \\
  \hline
\end{tabular}
\end{center}
\end{table*}
We focus on 2-cluster unbalanced settings and consider several real data sets from UCI repository. We construct $k$-NN, $b$-match, full-RBF and RMD graphs all combined with RBF weights, but do not include the $\epsilon$-graph because of its overall poor performance. For all the experiments, sample size $n=750$, average degree $k=30$. The RBF parameter $\sigma$ is set to be the average $k$-NN distance. GTAM parameter $\mu=0.05$(\cite{JebWanCha09}). For SSL algorithms 20 randomly labeled samples are chosen with at least one from each class.
%%%%%%%%%%%%%%%%%%%%%%%%%%%%%%%%%%%%%%%%%%

\textbf{Varying Unbalancedness:}
We start with a comparison, for 8vs9 of the 256-dim USPS digit data set. We keep the total sample size as 750, and vary the unbalancedness, i.e. the proportion of numbers of points from two clusters, denoted by $n_8,n_9$. Fig.\ref{fig:USPS8v9} shows that as the unbalancedness increases, the performance severely degrades on traditional graphs. RMD graph with deg$(u)=k(1/3+2R^2(u))$ has a stronger effect of emphasizing the Cut, and thus adapts to unbalancedness better than RMD graphs with deg$(u)=k(1/2+R(u))$.
%Our scheme provides an adjustable $\rho(u)$ in the limit expression of NCut, enabling the algorithm to adapt to data sets of different unbalancedness.

\textbf{Other UCI Data Sets:}
We fix 150/600 samples of two classes which amounts to an unbalancedness fraction of $1:4$ for several other UCI data sets. Results for RMD graph are obtained through the cross-validation scheme(Sec.\ref{subsec:cv_scheme}). Tab.\ref{tab:real_SC},\ref{tab:real_SSL} shows that our method consistently outperforms other graphss, sometimes overwhelmingly when the data set is unbalanced.

\newpage
\newpage
\bibliographystyle{IEEEtran}
\bibliography{RMD_bib}
\onecolumn
\onecolumn

\newpage

\textbf{APPENDIX: SUPPLEMENTARY MATERIAL}

For ease of development, let $n=m_1(m_2+1)$, and divide $n$ data points into: $D=D_0 \bigcup  D_1 \bigcup ... \bigcup D_{m_1}$, where $D_0=\{x_1,...,x_{m_1}\}$, and each $D_j, j=1,...,m_1$ involves $m_2$ points. $D_j$ is used to generate the statistic $G$ for $u$ and $x_j\in D_0$, for $j=1,...,m_1$. $D_0$ is used to compute the rank of $u$:
\begin{equation*}
    R(u) = \frac{1}{m_1}\sum_{j=1}^{m_1} \mathbb{I}_{\{ G(x_j;D_j)>G(u;D_j) \}}
\end{equation*}

We provide the proof for the statistic $G(u)$ of the following form:
\begin{eqnarray}
  G(u;D_j) &=& \frac{1}{l}\sum^{l+\lfloor \frac{l}{2} \rfloor}_{i=l-\lfloor \frac{l-1}{2} \rfloor}\left( \frac{l}{i} \right)^{\frac{1}{d}}D_{(i)}(u).
\end{eqnarray}
where $D_{(i)}(u)$ denotes the distance from $u$ to its $i$-th nearest neighbor among $m_2$ points in $D_j$. Practically we can omit the weight as Equ.\ref{eq:grank} in the paper. The proof for the first and second statistics can be found in [Zhao $\&$ Saligrama, 2008].

\textbf{Proof of Theorem 1:}

\begin{proof}
The proof involves two steps:
\begin{itemize}
  \item[1.] The expectation of the empirical rank $\mathbb{E}\left[R(u)\right]$ is shown to converge to $p(u)$ as $n\rightarrow\infty$.
  \item[2.] The empirical rank $R(u)$ is shown to concentrate at its expectation as $n\rightarrow\infty$.
\end{itemize}
The first step is shown through Lemma\ref{lem:expectation}. For the second step, notice that the rank $R(u) = \frac{1}{m_1}\sum_{j=1}^{m_1} Y_j$, where $Y_j = \mathbb{I}_{\{ G(x_j;D_j)>G(u;D_j) \}}$ is independent across different $j$'s, and $Y_j \in [0,1]$. By Hoeffding's inequality, we have:
\begin{equation*}
    \mathbb{P}\left( | R(u) - \mathbb{E}\left[R(u)\right] | > \epsilon \right) < 2\exp\left( -2m_1\epsilon^2 \right)
\end{equation*}
Combining these two steps finishes the proof.
\end{proof}

\textbf{Proof of Lemma 3:}

\begin{proof}
The proof argument is similar to \cite{Maier2} and we provide an outline here. The first trick is to define a cut function for a fixed point $x_i\in V^+$, whose expectation is easier to compute:
\begin{eqnarray}
\text{cut}_{x_i} = \sum_{v\in V^{-},(x_i,v)\in E}w(x_i,v).
\end{eqnarray}
Similarly, we can define $\text{cut}_{x_i}$ for $x_i\in V^-$. The expectation of $\text{cut}_{x_i}$ and  $\text{cut}_n(S)$ can be related:
\begin{eqnarray}\label{eq:expect}
\mathbb{E}(\text{cut}_n(S))=n\mathbb{E}_x(\mathbb{E}(\text{cut}_{x}))
\end{eqnarray}
Then the value of $\mathbb{E}(\text{cut}_{x_i})$ can be computed as,
\begin{equation*}
    (n-1)\int_0^{\infty}{\left[\int_{B(x_i,r)\cap{C^-}}f(y)\text{d}y\right]\text{d}F_{R_{x_i}^k}(r)}.
\end{equation*}
where $r$ is the distance of $x_i$ to its $k_n\rho(x_i)$-th nearest neighbor. The value of $r$ is a random variable and can be characterized by the CDF $F_{R_{x_i}^k}(r)$.
Combining equation \ref{eq:expect} we can write down the whole expected cut value
\begin{eqnarray*}
% \nonumber to remove numbering (before each equation)
  \mathbb{E}(\text{cut}_n(S)) =n\mathbb{E}_x(\mathbb{E}(\text{cut}_{x}))= n\int_{\mathbb{R}^d}f(x)\mathbb{E}(\text{cut}_{x})\text{d}x \\
   = n(n-1)\int_{\mathbb{R}^d}f(x)\left[\int_0^{\infty}{g(x,r)\text{d}F_{R_x^k}(r)}\right]\text{d}x.
\end{eqnarray*}

To simplify the expression, we use $g(x,r)$ to denote
\begin{equation*}
    g(x,r)=\begin{cases}
               \int_{B(x,r)\cap{C^-}}f(y)\text{d}y & x\in{C^+} \\
               \int_{B(x,r)\cap{C^+}}f(y)\text{d}y & x\in{C^-}.
             \end{cases}
\end{equation*}

Under general assumptions, when $n$ tends to infinity, the random variable $r$ will highly concentrate around its mean $\mathbb{E}(r_x^k)$.
Furthermore, as $k_n/n\rightarrow{0}$, $\mathbb{E}(r_x^k)$ tends to zero and the speed of convergence
\begin{eqnarray}\label{eq:EkNN}
\mathbb{E}(r_x^k)\approx(k\rho(x)/((n-1)f(x)\eta_d))^{1/d}
\end{eqnarray}
So the inner integral in the cut value can be approximated by $g(x,\mathbb{E}(r_x^k))$, which implies,
\begin{equation*}
    \mathbb{E}(\text{cut}_n(S))\approx{n}(n-1)\int_{\mathbb{R}^d}f(x)g(x,\mathbb{E}(r_x^k))\text{d}x.
\end{equation*}

The next trick is to decompose the integral over $\mathbb{R}^d$ into two orthogonal directions, i.e., the direction along the hyperplane $S$ and its normal direction (We use $\overrightarrow{n}$ to denote the unit normal vector):
\begin{equation*}
\begin{split}
    \int_{\mathbb{R}^d}f(x)g(x,\mathbb{E}(r_x^k))\text{d}x= &\\
    \int_{S}\int_{-\infty}^{+\infty}f(s+t\overrightarrow{n})g(s+&t\overrightarrow{n},\mathbb{E}(r_{s+t\overrightarrow{n}}^k))\text{d}t\text{d}s.
    \end{split}
\end{equation*}
When $t>\mathbb{E}(r_{s+t\overrightarrow{n}}^k)$, the integral region of $g$ will be empty: $B(x,\mathbb{E}(r_x^k))\cap{C^-}=\emptyset$. On the other hand, when $x=s+t\overrightarrow{n}$ is close to $s\in{S}$, we have the approximation $f(x)\approx{f(s)}$:
\begin{align*}
% \nonumber to remove numbering (before each equation)
  &\int_{-\infty}^{+\infty}f(s+t\overrightarrow{n})g(s+t\overrightarrow{n},\mathbb{E}(r_{s+t\overrightarrow{n}}^k))\text{d}t \\
  &\approx 2\int_{0}^{\mathbb{E}(r_{s}^k)}f(s)\left[f(s)\text{vol}\left(B(s+t\overrightarrow{n},\mathbb{E}{r_s^k})\cap{C^-}\right)\right]\text{d}t  \\
  &= 2f^2(s)\int_{0}^{\mathbb{E}(r_{s}^k)}\text{vol}\left(B(s+t\overrightarrow{n},\mathbb{E}(r_s^k))\cap{C^-}\right)\text{d}t.
\end{align*}

The term $\text{vol}\left(B(s+t\overrightarrow{n},\mathbb{E}(r_s^k))\cap{C^-}\right)$ is the volume of $d$-dim spherical cap of radius $\mathbb{E}(r_s^k))$, which is at distance $t$ to the center. Through direct computation we obtain:
\begin{equation*}
    \int_{0}^{\mathbb{E}(r_{s}^k)}\text{vol}\left(B(s+t\overrightarrow{n},\mathbb{E}(r_s^k))\cap{C^-}\right)\text{d}t=\mathbb{E}(r_s^k)^{d+1}\frac{\eta_{d-1}}{d+1}.
\end{equation*}
Combining the above step and plugging in the approximation of $\mathbb{E}(r_s^k)$ in Equation \ref{eq:EkNN}, we finish the proof.
%The $k$-NN radius tends to 0, so for point $x$ and its linked neighbor $y$, $p(x)+p(y)\approx2p(x)$. Decompose the integration over $\mathbb{R}^d$ into two steps, first at point $s$ over $S$ and then along the orthogonal direction at $s$, and insert the approximation of $k$-NN radius at $s$, we can obtain the result.
\end{proof}

\begin{lem}\label{lem:expectation}
By choosing $l$ properly, as $m_2\rightarrow\infty$, it follows that,
$$ | \mathbb{E}\left[R(u)\right] - p(u)| \longrightarrow 0$$
\end{lem}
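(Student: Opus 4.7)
By construction each $x_j \in D_0$ is i.i.d.\ from $f$ and is independent of $D_j$, and the $u$ in the statement can be treated as a fixed test point. Taking expectation over $D_0$ and $D_1,\dots,D_{m_1}$ and using that the summands share the same distribution, the first step is to rewrite
\[
\mathbb{E}[R(u)] \;=\; \mathbb{P}\bigl(G(X;D) > G(u;D')\bigr),
\]
where $X \sim f$ is independent of the two i.i.d.\ samples $D,D'$ of size $m_2$. (One can arrange $D' = D$ if desired; the argument is the same.) This shifts the problem from an empirical average to a single probability statement about the ordering of two $G$-statistics.

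The heart of the proof is the consistency claim that $G(\cdot;D)$ is asymptotically a decreasing deterministic function of the density. Concretely, for a fixed $x$ with $f(x)>0$, standard $k$-NN distance asymptotics (see, e.g., the computations already invoked in the sketch of Theorem~\ref{part2}) give, for the $i$-th nearest neighbor among $m_2$ points,
\[
D_{(i)}(x) \;\approx\; \left(\frac{i}{(m_2-1)\,f(x)\,\eta_d}\right)^{1/d},
\]
so that $(l/i)^{1/d} D_{(i)}(x) \approx \bigl(l / ((m_2-1) f(x) \eta_d)\bigr)^{1/d}$ uniformly in $i \in [l - \lfloor (l-1)/2\rfloor,\,l+\lfloor l/2\rfloor]$ whenever $l = l(m_2)$ is chosen with $l\to\infty$ and $l/m_2\to 0$. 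Averaging over the $l$ indices (and using a concentration bound for $k$-NN distances plus the smoothness of $f$) yields
\[
G(x;D) \;=\; \bigl(1+o_{\mathbb{P}}(1)\bigr)\left(\frac{l}{(m_2-1)\eta_d\,f(x)}\right)^{1/d}.
\]
Because the prefactor does not depend on $x$, the statistic $G$ is asymptotically a strictly decreasing function of $f$, and the event $\{G(X;D) > G(u;D')\}$ becomes equivalent to $\{f(X) < f(u)\}$.

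To turn this pointwise convergence into the desired probability limit I would fix $\sigma>0$ and split according to whether $|f(X)-f(u)|\geq \sigma$ or $<\sigma$. On $\{|f(X)-f(u)|\geq\sigma\}$, the $o_{\mathbb{P}}(1)$ fluctuations of $G$ are eventually smaller than the gap, so the indicator of $\{G(X;D)>G(u;D')\}$ matches that of $\{f(X)<f(u)\}$ with probability tending to $1$; a uniform argument (Chebyshev or a concentration inequality applied to the averaged $k$-NN statistic, combined with boundedness of $f$ from below by $f_{\min}$) gives the necessary uniformity in $X$. The contribution from $\{|f(X)-f(u)|<\sigma\}$ is controlled by the no-flat-region regularity assumption $\mathcal{P}\{y : |f(y)-f(u)|<\sigma\}\leq M\sigma$, and hence is $O(\sigma)$. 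Letting $m_2\to\infty$ and then $\sigma\to 0$ yields
\[
\mathbb{E}[R(u)] \;\longrightarrow\; \mathbb{P}\bigl(f(X)<f(u)\bigr) \;=\; \int_{\{x:\,f(x)\leq f(u)\}} f(x)\,\mathrm{d}x \;=\; p(u),
\]
where strict vs.\ non-strict inequality is irrelevant under the no-flat-region assumption. The main technical obstacle is the second paragraph: obtaining a quantitative, sufficiently uniform rate of convergence $G(x;D)\to (l/((m_2-1)\eta_d f(x)))^{1/d}$ so that the indicator swap in the third paragraph can be applied to a random $X$ rather than only pointwise. This rate is exactly what the choice of $l$ scaling (e.g.\ $l\asymp\sqrt{m_2}$, as used in the simulations) is designed to deliver via a bias-variance tradeoff on the averaged nearest-neighbor distance.
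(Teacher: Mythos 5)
Your outline follows the same strategy as the paper's proof: reduce $\mathbb{E}[R(u)]$ by i.i.d.\ symmetry to a single comparison probability $\mathbb{E}_x\left[\mathcal{P}_{D_1}\left(G(u;D_1)<G(x;D_1)\right)\right]$, argue that $G$ is asymptotically a strictly decreasing deterministic function of the density, split the outer integral according to whether $f(x)$ is close to $f(u)$, and kill the near-tie region with the no-flat-region assumption. So the architecture is right, and the conclusion $\mathbb{P}(f(X)<f(u))=p(u)$ is handled correctly.

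However, the step you yourself flag as ``the main technical obstacle'' -- a quantitative, uniform-in-$x$ rate that licenses swapping the indicator of $\{G(x)>G(u)\}$ for that of $\{f(x)<f(u)\}$ -- is precisely the content of the lemma, and your sketch does not supply it. The paper resolves it with two concrete ingredients that are absent from your proposal: (i) a bias bound (Lemma~\ref{lem:bound_expectation}), proved via Chernoff bounds on binomial counts, giving $|\mathbb{E}D_{(l)}(x)-A(x)|=O\bigl(A(x)(l/m_2)^{1/d}\bigr)$ with $A(x)=(l/(m_2 c_d f(x)))^{1/d}$, and (ii) McDiarmid's inequality applied to the \emph{difference} statistic $F_x=G(x;D_1)-G(u;D_1)$, exploiting that replacing one of the $m_2$ reference points perturbs the averaged order statistics by at most $O(C/l)$; conditional on $x$, this yields $\mathcal{P}_{D_1}(F_x>0)=\mathbb{I}_{\{\mathbb{E}F_x>0\}}\pm\exp\left(-\frac{(\mathbb{E}F_x)^2l^2}{8C^2m_2}\right)$. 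The split is then made with an $m_2$-dependent threshold $|f(x)-f(u)|\gtrsim(l/m_2)^{1/d}$ (rather than your fixed $\sigma$ followed by $\sigma\to0$, which also works but needs the same machinery), so that on $\mathbb{X}_1$ the deterministic gap $|\mathbb{E}F_x|\gtrsim(l/m_2)^{2/d}$ beats both the bias and the McDiarmid fluctuation, while $\mathcal{P}(\mathbb{X}_2)=O\bigl(M(l/m_2)^{1/d}\bigr)$. Balancing these forces the explicit choice $l=m_2^{\alpha}$ with $\frac{d+4}{2d+4}<\alpha<1$; note $\frac{d+4}{2d+4}>\frac12$ for every $d$, so your suggested scaling $l\asymp\sqrt{m_2}$ would \emph{not} satisfy the condition under this bounded-difference analysis (it is an empirical choice in the simulations, not the one the proof requires). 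Without (i), (ii), and the resulting constraint on $l$, your third paragraph's ``uniform argument (Chebyshev or a concentration inequality)'' remains an assertion rather than a proof.
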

\begin{proof}
Take expectation with respect to $D$:
\begin{eqnarray}
\mathbb{E}_D\left[R(u)\right]
&=&\mathbb{E}_{D\backslash D_0}\left[\mathbb{E}_{D_0}\left[\frac{1}{m_1}\sum_{j=1}^{m_1}
 \mathbb{I}_{\{G(u;D_j)<G(x_j;D_j)\}}\right]\right]\\
&=&\frac{1}{m_1}\sum_{j=1}^{m_1}\mathbb{E}_{x_j}\left[
\mathbb{E}_{D_j}\left[
\mathbb{I}_{\{G(u;D_j)<G(x_j;D_j)\}}\right]\right]\\
&=&\mathbb{E}_x\left[\mathcal{P}_{D_1}\left(G(u;D_1)<G(x;D_1)\right)\right]
\end{eqnarray}
The last equality holds due to the i.i.d symmetry of $\{x_1,...,x_{m_1}\}$ and $D_1,...,D_{m_1}$. We fix both $u$ and $x$ and temporarily discarding $\mathbb{E}_{D_1}$. Let $F_x(y_1,...,y_{m_2})=G(x)-G(u)$, where $y_1,...,y_{m_2}$ are the $m_2$ points in $D_1$. It follows:
\begin{equation*}
    \mathcal{P}_{D_1}\left(G(u)<G(x)\right)
    =\mathcal{P}_{D_1}\left(F_x(y_1,...,y_{m_2})>0\right)
    =\mathcal{P}_{D_1}\left(F_x-\mathbb{E}F_x>-\mathbb{E}F_x\right).
\end{equation*}

To check McDiarmid's requirements, we replace $y_j$ with $y_j'$. It is easily verified that $\forall j=1,...,m_2$,
\begin{equation}\label{equ:mcdiarmid_condition}
    |F_x(y_1,...,y_{m_2})-F_x(y_1,...,y_j',...,y_{m_2})| \leq 2^{\frac{1}{d}}\frac{2C}{l} \leq \frac{4C}{l}
\end{equation}
where $C$ is the diameter of support. Notice despite the fact that $y_1,...,y_{m_2}$ are random vectors we can still apply MeDiarmid's inequality, because according to the form of $G$, $F_x(y_1,...,y_{m_2})$ is a function of $m_2$ i.i.d random variables $r_1,...,r_{m_2}$ where $r_i$ is the distance from $x$ to $y_i$.
Therefore if $\mathbb{E}F_x<0$, or $\mathbb{E}G(x)<\mathbb{E}G(u)$, we have by McDiarmid's inequality,
\begin{equation*}
    \mathcal{P}_{D_1}\left(G(u)<G(x)\right)
    = \mathcal{P}_{D_1}\left( F_x > 0 \right)
    = \mathcal{P}_{D_1}\left( F_x-\mathbb{E}F_x>-\mathbb{E}F_x \right)
    \leq \exp\left(-\frac{(\mathbb{E}F_x)^2 l^2}{8C^2m_2}\right)
\end{equation*}
Rewrite the above inequality as:
\begin{equation}\label{equ:bound_no_expectation}
    \mathbb{I}_{\{\mathbb{E}F_x>0\}}-e^{-\frac{(\mathbb{E}F_x)^2 l^2}{8C^2m_2}}
    \leq \mathcal{P}_{D_1}\left( F_x > 0 \right)
    \leq \mathbb{I}_{\{\mathbb{E}F_x>0\}}+e^{-\frac{(\mathbb{E}F_x)^2 l^2}{8C^2m_2}}
\end{equation}
It can be shown that the same inequality holds for $\mathbb{E}F_x>0$, or $\mathbb{E}G(x)>\mathbb{E}G(u)$. Now we take expectation with respect to $x$:
\begin{equation}\label{equ:bound_with_expectation}
    \mathcal{P}_x\left(\mathbb{E}F_x>0\right)-\mathbb{E}_x\left[e^{-\frac{(\mathbb{E}F_x)^2 l^2}{8C^2m_2}}\right] \leq
    \mathbb{E}\left[\mathcal{P}_{D_1}\left( F_x > 0 \right)\right] \leq \mathcal{P}_x\left(\mathbb{E}F_x>0\right)+\mathbb{E}_x\left[e^{-\frac{(\mathbb{E}F_x)^2 l^2}{8C^2m_2}}\right]
\end{equation}
Divide the support of $x$ into two parts, $\mathbb{X}_1$ and $\mathbb{X}_2$, where $\mathbb{X}_1$ contains those $x$ whose density $f(x)$ is relatively far away from $f(u)$, and $\mathbb{X}_2$ contains those $x$ whose density is close to $f(u)$. We show for $x \in \mathbb{X}_1$, the above exponential term converges to 0 and $\mathcal{P}\left(\mathbb{E}F_x>0\right) = \mathcal{P}_x\left( f(u)>f(x) \right)$, while the rest $x\in\mathbb{X}_2$ has very small measure. Let $A(x)=\left(\frac{k}{f(x) c_d m_2}\right)^{1/d}$. By Lemma \ref{lem:bound_expectation} we have:
\begin{equation*}
    | \mathbb{E}G(x) - A(x) | \leq \gamma \left(\frac{l}{m_2}\right)^{\frac{1}{d}} A(x)
    \leq \gamma \left(\frac{l}{m_2}\right)^{\frac{1}{d}} \left(\frac{l}{f_{min}c_d m_2}\right)^{\frac{1}{d}}
    =    \left(\frac{\gamma_1}{c_d^{1/d}}\right) \left(\frac{l}{m_2}\right)^{\frac{2}{d}}
\end{equation*}
where $\gamma$ denotes the big $O(\cdot)$, and $\gamma_1 = \gamma \left(\frac{1}{f_{min}}\right)^{1/d}$. Applying uniform bound we have:
\begin{equation*}
    A(x)-A(u)- 2\left(\frac{\gamma_1}{c_d^{1/d}}\right) \left(\frac{l}{m_2}\right)^{\frac{2}{d}}
    \leq \mathbb{E}\left[G(x) - G(u)\right]
    \leq A(x)-A(u)+ 2\left(\frac{\gamma_1}{c_d^{1/d}}\right) \left(\frac{l}{m_2}\right)^{\frac{2}{d}}
\end{equation*}
Now let $\mathbb{X}_1=\{ x:|f(x)-f(u)|\geq 3\gamma_1 d f_{min}^{\frac{d+1}{d}} \left(\frac{l}{m_2}\right)^{\frac{1}{d}} \}$. For $x\in \mathbb{X}_1$, it can be verified that $|A(x)-A(u)|\geq 3\left(\frac{\gamma_1}{c_d^{1/d}}\right) \left(\frac{l}{m_2}\right)^{\frac{2}{d}}$, or $|\mathbb{E}\left[G(x) - G(u)\right]| > \left(\frac{\gamma_1}{c_d^{1/d}}\right) \left(\frac{l}{m_2}\right)^{\frac{2}{d}}$, and $\mathbb{I}_{\{f(u)>f(x)\}}=\mathbb{I}_{\{\mathbb{E}G(x)>\mathbb{E}G(u)\}}$. For the exponential term in Equ.(\ref{equ:bound_no_expectation}) we have:
\begin{equation}
    \exp\left(-\frac{(\mathbb{E}F_x)^2 l^2}{2C^2m_2}\right)
    \leq \exp\left(-\frac{ \gamma_1^2 l^{2+\frac{4}{d}} }{ 8C^2 c_d^{\frac{2}{d}} m_2^{1+\frac{4}{d}} } \right)
\end{equation}
For $x\in \mathbb{X}_2=\{x:|f(x)-f(u)|< 3\gamma_1 d \left(\frac{l}{m_2}\right)^{\frac{1}{d}}f_{min}^{\frac{d+1}{d}} \}$, by the regularity assumption, we have $\mathcal{P}(\mathbb{X}_2)<3M\gamma_1 d \left(\frac{l}{m_2}\right)^{\frac{1}{d}}f_{min}^{\frac{d+1}{d}}$. Combining the two cases into Equ.(\ref{equ:bound_with_expectation}) we have for upper bound:
\begin{eqnarray*}
% \nonumber to remove numbering (before each equation)
  \mathbb{E}_D\left[R(u)\right]
  &=& \mathbb{E}_x\left[\mathcal{P}_{D_1}\left(G(u)<G(x)\right)\right] \\
  &=& \int_{\mathbb{X}_1}\mathcal{P}_{D_1}\left(G(u)<G(x)\right)f(x)dx +  \int_{\mathbb{X}_2}\mathcal{P}_{D_1}\left(G(u)<G(x)\right)f(x)dx \\
  &\leq& \left( \mathcal{P}_x\left(f(u)>f(x)\right) + \exp\left(-\frac{ \gamma_1^2 l^{2+\frac{4}{d}} }{ 8C^2 c_d^{\frac{1}{d}} m_2^{1+\frac{4}{d}} } \right) \right)\mathcal{P}(x\in \mathbb{X}_1) + \mathcal{P}(x\in \mathbb{X}_2) \\
  &\leq&  \mathcal{P}_x\left(f(u)>f(x)\right) + \exp\left(-\frac{ \gamma_1^2 l^{2+\frac{4}{d}} }{ 8C^2 c_d^{\frac{1}{d}} m_2^{1+\frac{4}{d}} } \right) + 3M\gamma_1 d f_{min}^{\frac{d+1}{d}} \left(\frac{l}{m_2}\right)^{\frac{1}{d}}
\end{eqnarray*}
Let $l=m_2^\alpha$ such that $\frac{d+4}{2d+4}<\alpha<1$, and the latter two terms will converge to 0 as $m_2 \rightarrow \infty$. Similar lines hold for the lower bound. The proof is finished.
\end{proof}

\begin{lem}\label{lem:bound_expectation}
Let $A(x)=\left(\frac{l}{m c_d f(x)}\right)^{1/d}$, $\lambda_1 = \frac{\lambda}{f_{min}}\left(\frac{1.5}{c_d f_{min}}\right)^{1/d}$. By choosing $l$ appropriately, the expectation of $l$-NN distance $\mathbb{E}D_{(l)}(x)$ among $m$ points satisfies:
\begin{equation*}
    | \mathbb{E}D_{(l)}(x) - A(x) | = O\left( A(x) \lambda_1 \left(\frac{l}{m}\right)^{1/d} \right)
\end{equation*}
\end{lem}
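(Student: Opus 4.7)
The plan is to reduce $\mathbb{E}D_{(l)}(x)$ to a computation about the inverse of the local ball-probability function and then invoke concentration of a uniform order statistic. Set $F(r):=\int_{B(x,r)}f(y)\,dy$. Since each $F(\|X_i-x\|)$ is uniform on $[0,1]$ when $X_i\sim f$, and $F$ is nondecreasing, we have the distributional identity $D_{(l)}(x) \stackrel{d}{=} F^{-1}(U_{(l)})$, where $U_{(l)}$ denotes the $l$-th order statistic of $m$ i.i.d.\ $\mathrm{Unif}(0,1)$ variables.

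The first real step is to approximate $F$ locally using the smoothness of $f$. Writing $F(r) = c_d f(x) r^d + \int_{B(x,r)}(f(y)-f(x))\,dy$ and applying $\|\nabla f\|\le \lambda$ gives $|F(r) - c_d f(x) r^d| \le C_d\, \lambda r^{d+1}$ for small $r$. Setting $F(r)=u$ and solving perturbatively about the unperturbed root $A_u:=(u/(c_d f(x)))^{1/d}$ (substitute $r\approx A_u$ into the remainder and iterate once) yields
\begin{equation*}
F^{-1}(u) = A_u\bigl(1 + O(\lambda A_u/f_{\min})\bigr).
\end{equation*}
Evaluating at $u=l/m$ gives $A_{l/m}=A(x)$ and, after bounding $A(x) \le (l/(m c_d f_{\min}))^{1/d}$ and collecting constants into $\lambda_1$, produces $|F^{-1}(l/m)-A(x)| = O(A(x)\,\lambda_1\,(l/m)^{1/d})$, already of the target order.

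What remains is to compare $\mathbb{E}[F^{-1}(U_{(l)})]$ with $F^{-1}(l/m)$. I would use $\mathbb{E}[U_{(l)}]=l/(m+1)$, $\mathrm{Var}(U_{(l)})=O(l/m^2)$, and Chernoff-type tails for the binomial count $N_r := |\{i : X_i \in B(x,r)\}| \sim \mathrm{Binom}(m,F(r))$ to confine $U_{(l)}$ to a $\Theta(\sqrt{l}/m)$-neighborhood of $l/m$ up to super-polynomially small probability. A second-order Taylor expansion of $F^{-1}$ at $l/m$---whose first and second derivatives are of order $A(x)m/l$ and $A(x)(m/l)^2$ respectively---then gives $\mathbb{E}[F^{-1}(U_{(l)})] - F^{-1}(l/m) = O(A(x)/l)$, with the tail region absorbed into the compact diameter of $C$ times the tail probability. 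Provided $l$ is chosen so that $1/l \lesssim (l/m)^{1/d}$ (equivalently $l \gtrsim m^{1/(d+1)}$), the two error sources combine to the claimed bound.

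The principal obstacle is this last step: the Taylor remainder of the nonlinear map $F^{-1}$ must be tracked carefully against the fluctuations of $U_{(l)}$, while simultaneously carrying the Lipschitz correction through the inversion of $F$. The cleanest organization splits the expectation into a high-probability window where the quadratic expansion is sharp and a complementary rare-event region that compactness renders negligible; the variance contribution $O(A(x)/l)$ is exactly what dictates the lower bound on $l$ hidden in the phrase ``by choosing $l$ appropriately.''
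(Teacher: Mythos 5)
Your proposal is correct in substance but follows a genuinely different route from the paper. The paper never passes through the order-statistic representation: it sandwiches $\mathbb{E}D_{(l)}(x)$ directly between quantiles $r\bigl(x,(1\pm\delta_m)\tfrac{l}{m}\bigr)$ plus a term $C\exp\bigl(-\delta_m^2 l/2(1\pm\delta_m)\bigr)$, using Chernoff bounds on the binomial count of sample points in a ball, then converts the quantile into $A(x)$ via the Lipschitz lower bound $f(y)\geq f(x)-\lambda r$ inside $B(x,r)$, and finally chooses $\delta_m=m^{-1/4}$, $l=m^{\frac{3d+8}{4d+8}}$ so that the $A(x)\delta_m$ and exponential terms are absorbed into the $A(x)\lambda_1(l/m)^{1/d}$ term. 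You instead write $D_{(l)}(x)\stackrel{d}{=}F^{-1}(U_{(l)})$, split the error into the deterministic inversion bias $|F^{-1}(l/m)-A(x)|=O(A(x)\lambda_1(l/m)^{1/d})$ (this piece coincides with the paper's Lipschitz step) and a stochastic term controlled by the Beta moments of $U_{(l)}$ and a second-order expansion of $F^{-1}$. What each buys: the paper's argument needs only one-sided quantile estimates and first-order information on $f$, and is shorter; yours yields a sharper stochastic error $O(A(x)/l)$ rather than the paper's effective $O(A(x)\delta_m)$, hence the milder explicit constraint $l\gtrsim m^{1/(d+1)}$ in place of the paper's specific choice of $l$ and $\delta_m$, at the cost of controlling $(F^{-1})''$ on the concentration window (legitimate here, since the regularity condition gives $\|\nabla f\|\leq\lambda$, so $F''$ exists and $(F^{-1})''(u)=O(A(x)(m/l)^2)$ for $u\asymp l/m$). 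Two small cautions: a window of width $\Theta(\sqrt{l}/m)$ only captures constant probability, so take width $\Theta(\sqrt{l\log m}/m)$ (or use Beta tails) to make the complement super-polynomially small before absorbing it via the compact diameter; and, like the paper, both arguments quietly ignore boundary-of-support effects and treat $\lambda_1$ as a fixed positive constant when absorbing the stochastic error into the stated $O(A(x)\lambda_1(l/m)^{1/d})$ bound.
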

\begin{proof}
Denote $r(x,\alpha)=\min\{r:\mathcal{P}\left(B(x,r)\right)\geq \alpha\}$. Let $\delta_m \rightarrow 0$ as $m \rightarrow \infty$, and $0<\delta_{m}<1/2$.
Let $U\sim Bin(m,(1+\delta_m)\frac{l}{m})$ be a binomial random variable, with $\mathbb{E}U = (1+\delta_{m})l$. We have:
\begin{eqnarray*}
% \nonumber to remove numbering (before each equation)
  \mathcal{P}\left(D_{(l)}(x)>r(x,(1+\delta_m)\frac{l}{m})\right)
  &=& \mathcal{P}\left(U<l\right) \\
  &=& \mathcal{P}\left(U<\left(1-\frac{\delta_m}{1+\delta_m}\right)(1+\delta_m)l\right) \\
  &\leq& \exp\left(-\frac{\delta_m^2 l}{2(1+\delta_m)}\right)
\end{eqnarray*}
The last inequality holds from Chernoff's bound. Abbreviate $r_1 = r(x,(1+\delta_m)\frac{l}{m})$, and $\mathbb{E}D_{(l)}(x)$ can be bounded as:
\begin{eqnarray*}
  \mathbb{E}D_{(l)}(x)
  &\leq& r_1\left[1-\mathcal{P}\left(D_{(l)}(x)>r_1\right)\right] + C\mathcal{P}\left(D_{(l)}(x)>r_1\right)  \\
  &\leq& r_1 + C \exp\left(-\frac{\delta_m^2 l}{2(1+\delta_m)}\right)
\end{eqnarray*}
where $C$ is the diameter of support. Similarly we can show the lower bound:
\begin{equation*}
    \mathbb{E}D_{(l)}(x) \geq r(x,(1-\delta_m)\frac{l}{m}) - C \exp\left(-\frac{\delta_m^2 l}{2(1-\delta_m)}\right)
\end{equation*}
Consider the upper bound. We relate $r_1$ with $A(x)$. Notice $\mathcal{P}\left(B(x,r_1)\right)=(1+\delta_m)\frac{l}{m} \geq c_d r_1^d f_{min}$, so a fixed but loose upper bound is $r_1 \leq \left(\frac{(1+\delta_m)l}{c_d f_{min} m}\right)^{1/d} = r_{max}$. Assume $l/m$ is sufficiently small so that $r_1$ is sufficiently small. By the smoothness condition, the density within $B(x,r_1)$ is lower-bounded by $f(x)-\lambda r_1$, so we have:
\begin{eqnarray*}
  \mathcal{P}\left(B(x,r_1)\right) &=& (1+\delta_m)\frac{l}{m} \\
  &\geq& c_d r_1^d \left( f(x)-\lambda r_1 \right)\\
  &=& c_d r_1^d f(x)\left( 1-\frac{\lambda}{f(x)}r_1 \right) \\
  &\geq& c_d r_1^d f(x)\left( 1-\frac{\lambda}{f_{min}}r_{max} \right)
\end{eqnarray*}
That is:
\begin{equation*}
    r_1 \leq A(x)\left( \frac{1+\delta_m}{1-\frac{\lambda}{f_{min}}r_{max}} \right)^{1/d}
\end{equation*}
Insert the expression of $r_{max}$ and set $\lambda_1 = \frac{\lambda}{f_{min}}\left(\frac{1.5}{c_d f_{min}}\right)^{1/d}$, we have:
\begin{eqnarray*}
% \nonumber to remove numbering (before each equation)
  \mathbb{E}D_{(l)}(x)-A(x) &\leq& A(x)\left( \left(\frac{1+\delta_m}{1-\lambda_1 \left(\frac{l}{m}\right)^{1/d}}\right)^{1/d} -1 \right) + C \exp\left(-\frac{\delta_m^2 l}{2(1+\delta_m)}\right) \\
  &\leq& A(x)\left( \frac{1+\delta_m}{1-\lambda_1 \left(\frac{l}{m}\right)^{1/d}}-1 \right) + C \exp\left(-\frac{\delta_m^2 l}{2(1+\delta_m)}\right) \\
  &=& A(x)\frac{\delta_m + \lambda_1 \left(\frac{l}{m}\right)^{1/d}}{1-\lambda_1\left(\frac{l}{m}\right)^{1/d}} + C \exp\left(-\frac{\delta_m^2 l}{2(1+\delta_m)}\right) \\
  &=& O\left( A(x) \lambda_1 \left(\frac{l}{m}\right)^{1/d} \right)
\end{eqnarray*}
The last equality holds if we choose $l=m^{\frac{3d+8}{4d+8}}$ and $\delta_m=m^{-\frac{1}{4}}$. Similar lines follow for the lower bound. Combine these two parts and the proof is finished.

\end{proof}

\end{document}